\theoremstyle{plain}
\newtheorem{theorem}{Theorem}[section]
\theoremstyle{definition}
\theoremstyle{remark}
\newcommand{\iid}{\overset{iid}{\sim}}
\newcommand{\E}{\mathbb{E}}
\newcommand{\V}{\mathbb{V}}
\DeclareMathOperator*{\argmax}{arg\,max}
\newcommand{\indep}{\perp \!\!\! \perp}
\newcommand{\simiid}{\stackrel{iid}{\sim}}
\icmltitlerunning{Missing Data Multiple Imputation for Tabular Q-Learning in Online RL
}
\begin{document}

\twocolumn[ %
\icmltitle{Missing Data Multiple Imputation for Tabular Q-Learning in Online RL
}

\icmlsetsymbol{equal}{*}

\begin{icmlauthorlist}
\icmlauthor{Kyla Chasalow}{equal,yyy}
\icmlauthor{Skyler Wu}{equal,zzz}
\icmlauthor{Susan Murphy}{yyy,xxx}
\end{icmlauthorlist}

\icmlaffiliation{yyy}{Department of Statistics, Harvard University, Cambridge, United States}
\icmlaffiliation{zzz}{Department of Statistics, Stanford University, Stanford, United States}
\icmlaffiliation{xxx}{Department of Computer Science, Harvard University, Cambridge, United States}

\icmlcorrespondingauthor{Kyla Chasalow}{kyla\_chasalow@g.harvard.edu}
\icmlcorrespondingauthor{Skyler Wu}{skylerw@stanford.edu}

\icmlkeywords{Reinforcement Learning, ICML}

\vskip 0.3in
]

\printAffiliationsAndNotice{\icmlEqualContribution} %

\begin{abstract}
Missing data in online reinforcement learning (RL) poses  challenges compared to missing data in standard tabular data or in offline policy learning. The need to impute and act at each time step means that imputation cannot be put off until enough data exist to produce stable imputation models. It also means future data collection and learning depend on previous imputations.  This paper proposes fully online imputation ensembles. We find that 
maintaining multiple imputation pathways may help balance the need to capture uncertainty under missingness and the need for efficiency in online settings. We consider multiple approaches for incorporating these pathways into learning and action selection. Using a Grid World experiment with various types of missingness, we provide preliminary evidence that multiple imputation pathways may be a useful framework for constructing simple and efficient online missing data RL methods. 
\end{abstract}

\section{Introduction} %

Missing data is a reality of working with real-world data. In online Reinforcement Learning (RL), where the goal is to learn how to act to maximize a reward while interacting with a dynamic environment in real time, there is an added complexity that the way we deal with missing data at earlier times affects subsequent data collection and learning. Unlike in offline-RL or traditional tabular data, it is not enough to design a single imputation model. The agent confronts missingness repeatedly.%

Compared to offline or batch-RL, in fully online RL it is especially critical that decisions are made in a computationally efficient way. 
Methods also need to be autonomous and not require a human-in-the-loop. Particularly when a learning algorithm is part of an intervention, as in mobile health studies, it is important that the algorithm is pre-specified and not altered during the study \cite{Ghosh2024}. Such algorithms must therefore minimize the risk of  bad behavior. 

Outside of RL, multiple imputation (MI) is an established framework for dealing with missing data. Imputations are drawn  independently from an imputation model to create a number of completed datasets, allowing analysts to run standard analyses multiple times and combine the results while quantifying uncertainty due to missingness \cite{Schafer_1999}. Transferring MI and its properties to online RL is not immediate given the complex dependencies that can arise if past imputations and imputation-based decisions inform future ones. However, the success of ensemble-based methods in RL \citep{WieringHado2008, fausser2015neural, Chen_etal_2021} %
suggests the possibility of creating multiple imputation pathways and combining them to decide how to act at each time step.

In this paper, we explore multiple imputation ensembles for online RL with missing state space data motivated by two intuitions: (1) 
probabilistic  imputations using  learned transitions may outperform simpler baselines if they allow agents to better leverage partial information (2) by better representing uncertainty, imputation ensembles may avoid issues of path dependency that could emerge from a single imputations under repeated missingness.  We focus for now on tabular Q-learning in order to uncover core dynamics and insights. We map out a number of design choices involved in creating these ensembles, including fractional updating, action-selection, and whether to treat past imputations as if they are real data for learning. Then we provide a grid world experiment that demonstrates their potential to out-perform simple baselines and single imputation under different missingness scenarios. We also compare these methods to encoding missingness as a state option, revealing a U-shaped performance curve as a function of missingness rate that we believe occurs because of how this approach effectively increases or reduces the state space dimension. %

After establishing background and notation in Section \ref{sec-prelim}, we introduce our methods in Section \ref{sec-methods}. Section \ref{sec-experiment} contains experiments on a custom grid world with missingness. Section \ref{sec-discussion} discusses and concludes.

\section{Background and Related Literature}\label{sec-prelim} 
\subsection{Reinforcement Learning}

We consider the finite state and action-space Markov Decision Process (MDP) in which the environment is a Markovian sequence $(S_t,A_t,R_{t+1},S_{t+1})$ for $t=0,1,2,...$ \cite{SuttonBarto2018}. At time $t$, the state $S_t$  takes values in $\mathcal{S}\subseteq \mathbb{R}^d$, and the agent selects action $A_t$ from an action space $\mathcal{A}$ with $|\mathcal{A}|<\infty$. Given a state-action pair $(S_t,A_t)=(s,a)$, the environment transitions to state $S_{t+1}$ according to  transition distribution $T(s'|s,a)$ with $p_0$ the distribution of initial state $S_0$. The agent also receives a reward $R_{t+1}\in \mathcal{R}$ for some bounded space $\mathcal{R}$ and where the mean reward is $r(s,a)$. %
The goal of RL is to learn a policy $\pi(a|s)$ which picks an optimal action given the current state despite environment dynamics $r(s,a)$ and $T(s'|s,a)$ being unknown. Because in an MDP, actions have delayed consequences, learning an optimal policy $\pi^*$ requires optimizing the expected discounted sum of future rewards  known as 
the value function $\V^\pi(s)=\E[G_t|S_t=s]$, where $G_t = \sum_{k=0}^{\infty}\gamma^k R_{t+k+1}$, with discount $\gamma \in [0,1)$ and $\gamma=1$ allowed in episodic settings. We use tabular Q-Learning with $\epsilon$-greedy action selection to learn an optimal policy. Q-Learning learns the optimal state-action-value function $Q^{\pi^*}(s,a) =\E_{\pi^*}[G_t|S_t=s,A_t=a]$ using a temporal difference update based on Bellman's equations,

\vspace{-.2cm}

\scriptsize
\begin{equation}\label{eq-qupdate}
Q_{t+1}(S_t,A_t) = Q_t(S_t,A_t) + \alpha \ TD_\gamma(S_t,A_t,R_{t+1},S_{t+1}),
\end{equation}
\normalsize 
where $\alpha$ is the learning rate and
\scriptsize 
\begin{equation*}
TD_\gamma(S_t,A_t,R_{t+1},S_{t+1})  = R_{t+1} + \gamma \max_{a}Q_t(S_{t+1},a) -Q_t(S_t,A_t).
\end{equation*}
\normalsize

In $\epsilon$-greedy action selection, the RL agent takes a random action with probability $\epsilon\geq 0$,  and with probability $1-\epsilon$, the agent takes action,  $\pi_{t+1}(a|s) = \argmax_a Q_{t+1}(s,a)$.  Standard Q-learning does not require modeling and  learning the reward function $r(s,a)$ or transitions $T(s'|s,a)$ explicitly, making it a `model-free' algorithm. 

\subsection{Missing Data in RL}\label{sec-missingdatarl}

Missing data in RL occurs when the agent receives incomplete information about the state or reward. The action is usually not missing since the agent decides this. We assume rewards are observed and focus on fully or partially missing states. Let $M_t$ be a binary vector of the same dimension as $S_t$ with $M_{t,j}=1$ if $S_{t,j}$ is missing and $0$ otherwise and let $\mathcal{M}$ be the set of all possible missingness vectors $M_t$. At time $t$, let $S_t^{obs}=\{S_{t,j} : M_{t,j}=0\}$ be the observed part of state $S_t$ and $S_t^{mis}=\{S_{t,j}:M_{t,j}=1\}$ be the missing part, where one of these could be $\emptyset$.

Missingness mechanisms are commonly classified into three cases \citep{Rubin1976}. Within a single time-step, these are (1) Missing Completely at Random (MCAR) where $M_t \indep S_t$, (2)  Missing at Random (MAR) where  $M_t\indep S_t^{mis} \mid S_t^{obs}$ but missingness may depend on observed values, and (3) Not Missing at Random (NMAR), the most challenging case where missingness depends on the values of the missing data (e.g., truncation). Missingness is a challenge both at the point of action selection, where the agent knows only $S_t^{obs}$ but the optimal action may depend on $S_t^{mis}$, and for learning,  where it becomes unclear how to do $Q$-Learning or other updates given only $(S_{t}^{obs},A_t,R_{t+1},S_{t+1}^{obs})$.  The sequential nature of RL also raises the possibility of missingness that depends on longer histories of states, actions, and rewards.

\subsection{Related Literature}

The standard multiple imputation (MI) recipe is to fit a Bayesian model of missing data $X_{mis}$ and parameter $\theta$ given observed data $X_{obs}$, draw $(X_{mis}^{(k)})_{k=1}^{K}$ from the posterior predictive distribution, calculate an estimate for each complete dataset, and combine them for estimation and uncertainty quantification \cite{RubinDonaldB1987Mifn, Schafer_1999}. In offline RL, where sequences $\{(S_{t}^{obs},A_{t},R_{t+1},S_{t+1}^{obs})\}_{t=1,...,T}$ have already been collected, sequential MI and other MCMC based methods are available, and having a human check output for convergence is feasible \cite{lizotte2008missing, Shortreed2011, Yamaguchi2020}.
The problem with applying these online  is that re-fitting models repeatedly can be prohibitively expensive, especially if the algorithms require human review.

A related RL approach applicable in principle but potentially infeasible online is the Partially-Observed MDP (POMDP), in which the agent maintains a \textit{belief state} (posterior distribution) over unknown parameters and latent states and learns a policy given the belief state \cite{Pascal_etal_2021}. Computational efficiency is a challenge for POMDPs, as solving them generally requires approximations to deal with its continuous state space \citep{Poupart2008,Shani2013}. %
Although standard POMDP pose fixed latent versus observed states, \citet{Wang2019} adapt them to the case of ``dynamically missing" and noisy observations. They provide Gaussian belief state approximations and transition models to make updates more tractable but also leverage deep neural networks, which can be costly to train. Their method requires a \textit{training phase} for learning a transition model, which means it is not fully online.

A key insight from the POMDP  is the need to propagate uncertainty in missing state space values. In this paper, we explore how well we can achieve this via imputation ensembles formed from simpler, more myopic, probabilistic imputation models. The resulting imputation pathways can represent different possibilities for the missing states and are combined for action selection. Ensembles have been used with success in other areas of RL, including ensembles that use different learning algorithms \cite{WieringHado2008} and ensembles of different Q functions, used to reduce variance and address maximization bias in Q-learning \cite{fausser2015neural, Chen_etal_2021, SuttonBarto2018, Ghasemipouretal_2022}. %

Ensembles also connect to a literature on model-based RL `rollouts' in which synthetic data, generated under some model, is incorporated into the process of RL learning to speed up learning \cite{Rajeswaren2017,janner2019mbpo, Ghasemipouretal_2022}. There, too, ensembles have been used to help prevent the learning algorithm from relying too much on any particular model and thereby learn policies which better generalize to the target domain. %
Like these methods, our approach requires learning some transition distribution and using them to generate synthetic data (imputations).  Although in our experiment, a saturated tabular transition model is feasible, more generally, some restriction to simpler models of the transition distribution will often be necessary in the online setting to avoid long learning times and computationally and data-intensive function approximator \citep{Rajeswaren2017} -- all the more so when learning is slowed by missingness in the observations.

There is also a connection to work on \textit{delayed} rewards and/or states,  where there is a need to impute or propagate uncertainty about what the current state may be given that we have not observed it yet and/or model the delay mechanism (e.g., \citet{agarwal2021, Chenetal2023, joulani2013online}). Some of these methods may be extendable to cases where some states are \textit{never} observed, but these methods do not address how to deal with \textit{partial} missingness where $S_{t+1}^{obs}$ is observed and might be predictive of $S_{t+1}^{mis}$. We focus on a case without any delays but future work might consider states where each component may be observed immediately, with a delay, or never.

Perhaps the closest approach to our own is that of \cite{ma2020discriminativeparticlefilterreinforcement},
who \textit{approximate the belief state} with a set of importance-weighted particles. While this approach is somewhat similar to our multiple-imputed states setup, one critical difference is that Ma et al.'s approach requires using a neural network to represent what they term a \textit{compatibility function} $f_{\text{obs}}(h_t, o_t)$, i.e., an unnormalized surrogate  of the generative observation model $p(o_t \mid h_t)$ (as as weighting tool), where $o_t$ is the observed state and $h_t$ is the true latent state. Technically, the \textit{compatibility function} only needs to capture aspects of the modeling setup that are relevant for the underlying RL task and does not need to be a true probability distribution. Nonetheless, our proposed method is designed to be less computationally-expensive (no surrogate neural network necessary, and updating only a portion of our framework parameters) and avoids potentially high-variance importance-weighting. In addition to proposing our ensemble method, we hope to provide a generally useful framework for thinking about the various design choices involved when working with missing data in RL.%

\section{Methods} \label{sec-methods} 

\subsection{Forming Imputation Ensembles}

\begin{figure}
    \centering
    \begin{tikzpicture}
        \node[rectangle,draw,minimum size=0.5cm,fill=blue!10,line width=1.2pt] (S1) at (3,-1) {$S_1$};
        \node[rectangle,draw,minimum size=0.5cm,fill=blue!10,line width=1.2pt] (S2obs) at (5,-1) {$S_2^{obs}\ \ $};
        \node[rectangle,draw,minimum size=0.5cm,fill=blue!10,line width=1.2pt] (S3obs) at (8,-1) {$S_3^{obs}\ \ $};
        \node[rectangle,draw,minimum size=0.5cm,fill=blue!10,line width=1.2pt] (S4) at (10,-1) {$S_4$};

        \node[rectangle,draw,minimum size=0.5cm,fill=white,line width=1.2pt] (S2mis) at (5,0) {$S_2^{mis,k}$};
        \node[rectangle,draw,minimum size=0.5cm,fill=white,line width=1.2pt] (S3mis) at (8,0) {$S_3^{mis,k}$};

        \node[rectangle,draw,minimum size=0.5cm,fill=white,line width=1.2pt] (B2mis) at (5,-2) {$S_2^{mis,k'}$};
        \node[rectangle,draw,minimum size=0.5cm,fill=white,line width=1.2pt] (B3mis) at (8,-2) {$S_3^{mis,k'}$};

        \draw[->, ultra thick] (S1) -- (S2obs);
        \draw[->, ultra thick] (S2obs) -- (S3obs);
        \draw[->, ultra thick] (S3obs) -- (S4);

        \draw[ultra thick] (S1) -- (B2mis);
        \draw[ultra thick] (S1) -- (S2mis);
        
        \draw[->, ultra thick] (S2obs) -- (S2mis);
        \draw[->,ultra thick] (S2obs) -- (S3mis);
        \draw[->, ultra thick] (S2obs) -- (B2mis);
        \draw[->,ultra thick] (S2obs) -- (B3mis);

        \draw[->, ultra thick] (S3obs) -- (B3mis);
        \draw[->, ultra thick] (S3obs) -- (S3mis);
        
        \draw[->,ultra thick] (S2mis) -- (S3mis);
        \draw[->,ultra thick] (B2mis) -- (B3mis);

    \end{tikzpicture}
    \caption{\footnotesize{Pathways of imputations with $S_1$ and $S_4$ fully observed. Past imputations affect future imputations, though not the observed parts of future states. Actions, which also affect and are affected by states and imputations, are not depicted.}}
    \label{fig-pathdiagram}
\end{figure}
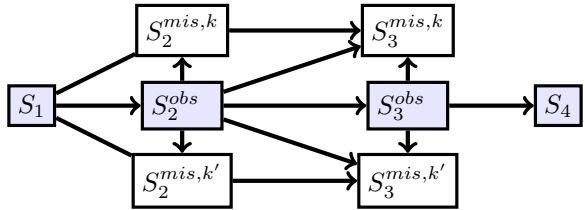

The idea of an \textbf{imputation ensemble} is to maintain $K$ different \textbf{imputation pathways} that contain different sequences of imputed missing values as in Figure \ref{fig-pathdiagram}. For each path $k$, given a previous state-action pair $(S_t^k,A_t)$ and the current $S_{t+1}^{obs}$, we draw $S_{t+1}^{mis,k}$ and set $S_{t+1}^{k} = (S_{t+1}^{mis,k},S_t^{obs})$. This forms a \textit{path} because for each $k$, we condition on the previous (possibly) imputed state $S_t^k$ when imputing  the next state. If the state is repeatedly missing, these paths could cumulate errors where imputing $S_{t}^{mis,k}$ incorrectly leads to an even more incorrect $S_{t+1}^{mis,k}$. Ensembles may be able to mitigate this by representing multiple plausible pathways, especially if their imputation errors are not too correlated \citep{fausser2015neural, breiman2001random}. 
To obtain these pathways and use them for Q-learning in RL, important design choices arise at three junctures: (1) Generating imputations (2) Learning, and (3) Action selection. A full procedure for our main method (i.e., multiple-imputation with synthetic-data fractional Q- and T-updates) is given in Algorithm \ref{algo:fractional_q_learning}. Variants of our methods discussed in the following subsections can be constructed similarly.

\subsubsection{Generating Imputations}\label{sec-transitions}

Probabilistically imputing $S_t^{mis}$ requires learning a model or models of the distribution $S_{t+1}^{mis}|S_{t+1}^{obs},S_t, A_t$. Model-based RL methods already involve estimating transition models $T(s'|s,a)$ but translating these to the conditional distributions $\hat{T}_t(s'_I|s'_{I^c},s,a)$ for $I\subseteq\{1,...,d\}$ when $s'$ is partially observed may not be immediate. In lower dimensional tabular cases, estimating transition probabilities can be done by accumulating counts. Regardless of approach, learning a transition model online while also imputing from it  raises a question of  whether to update the transition model using only observed data  or  also  \textit{synthetic imputed data}. In our experiments, we consider two options:

\vspace{-.3cm}
\begin{enumerate}
    \item \textbf{Conservative}: fit an imputation model that learns only from counts based on fully observed tuples $(S_t,A_t,S_{t+1})$, with update
\footnotesize
    \begin{align}\label{eq-tudpate-cons}
        \hat{T}_{t+1}(s'|a,s) &= \frac{n_{s,a,s'}^{(t+1,full)}}{n_{s,a}^{(t+1,full)}}  \hspace{.3cm}\forall s,a,s'
    \end{align}
    \normalsize 
    
    \item \textbf{Synthetic}: fit an imputation model that learns from observed and imputed (synthetic) data counts. In the tabular case, at time $t+1$, for $k=1,...K$, we update
    \footnotesize
    \begin{align}\label{eq-tudpate}
        n_{S_{t}^k,A_t,S_{t+1}^k}^{(t+1)} &=  n_{S_{t}^k,A_t,S_{t+1}^k}^{(t)} + \frac{1}{K} \\
        n_{S_{t}^k,A_t}^{(t+1)} &=  n_{S_{t}^k,A_t}^{(t)} + \frac{1}{K} \\
        \hat{T}_{t+1}(s'|a,s) &= \frac{n_{s,a,s'}^{(t+1)}}{n_{s,a}^{(t+1)}}  \hspace{.3cm}\forall s,a,s'
    \end{align}
    \normalsize 
    The fractional update ensures that single imputations have less weight than observed data and ensures proper normalization (Appendix \ref{sec-fracupdate}). %
\end{enumerate}

Option 1 risks throwing out partial observations but option 2 risks self-reinforcement where the data appear to support a certain model but were generated under that model. Other options, which we leave for future investigation, are (3) to fit $K$ different imputation models, one for each $(S_t^k,A_t,S_{t+1}^{k})$, which is more expensive but could lead to less correlated imputations, and (4) to fit only univariate models of the $d^{th}$ element of $s'$ given $s_{-d}',s,a$, possibly with cycling as in MICE \cite{vanBuuren2007,Azuretal2011}. This is less principled but may be a necessary compromise in higher dimensional settings.

Given missingness, the \textbf{Markovian assumption} of the MDP may no longer be satisfied by $S_t^{obs},A_t$ alone, meaning it could be beneficial (but more difficult) to account for the longer history of observations when imputing. The options above yield imputations  that are \textbf{myopic} because they ignore this fact and instead use the possibly imputed previous state as if it were observed. Even so, the Markovian nature of the underlying environment is helpful, as whenever we have fully observed $S_t$, the imputation pathways sync (Figure \ref{fig-pathdiagram}) and past states do become irrelevant, though past imputations may still impact the estimated transitions and  be embedded in $Q$ updates.

\subsubsection{Learning Updates}

Given multiple imputations we learn a single Q function by applying a version of Equation  \eqref{eq-qupdate} with a fractional learning rate sequentially for $k=1,...,K$. 

\scriptsize
\begin{equation}\label{eq-qupdate-frac}
Q_{t+1}^{(k)}(S_t^{k},A_t) = Q_{t+1}^{(k-1)}(S_t^{k},A_t) + \frac{\alpha}{K} \ TD_\gamma(S_t^k,A_t,R_{t+1},S^k_{t+1}),
\end{equation}

\normalsize 
where $TD_{\gamma}$ is calculated for $Q_{t+1}^{(k-1)}$ and letting $Q_{t+1}^{0}=Q_t^{(K)} =Q_t$. Though learning rate $\alpha$ needs to be tuned, the $\frac{1}{K}$ is not meaningless because for any given $\alpha$, it moderates the impact of imputations. When $S_t,S_{t+1}$ are fully observed so that for all $k$, $S_t^k =S_t^{obs},S_{t+1}^{k}=S_{t+1}^{obs}$, the full cycle of $K$ updates is, for $\alpha/K<1$, approximately equivalent to the overall Q learning update with learning rate $\alpha$ (Appendix \ref{app-frac-Qlearn}).
When $S_t,S_{t+1}$ have missingness and imputations vary over $k=1,...,K$, the updates are distributed and hence their impact diminished --  moreso the more the imputations disagree. This suggests it is useful for the variation in the imputations to be well-calibrated to the actual uncertainty as to the missing states. Another option would be to maintain $Q_{t+1}^{k}$ separate Q functions, doing one $\alpha$ rate update per pathway, and then combine them at point of action selection. We opt not to do this not only because it increases computational burden but because we suspect it would lead to more instability as each learned Q function would never be corrected by any of the alternative imputation pathways.

\subsubsection{Action Selection}

To better promote exploration of the state-action space, we select the next action at each timestep using a voting ensemble over the various imputed states, as opposed to an argmaxing approach. Formally, define policy $\pi_{t+1}(s) = \argmax_a Q_{t+1}(s,a)$. We then calculate $A_{t+1}^{k} = \pi_{t+1}(S_{t+1}^{k})$ for $k=1,...,K$ and select among the $A_{t+1}^{k}$ at random. In future work, we will explore alternative softmax and average Q function based action selection strategies.

%

\begin{algorithm*}[!t]
\begin{algorithmic}
\STATE {\bfseries Input:} $\mathcal{S},\mathcal{M},\mathcal{A}, \epsilon >0, \gamma \in [0,1), K \geq 1$, stop criterion \\
\STATE \textbf{\bfseries Init:} $Q_0(s,a) = 0$, $\hat{T}_0(s'|a,s)=0$,  for $a\in \mathcal{A},s,s'\in \mathcal{S}$\\
For $t=0$, observe $S_0^{obs}$. If $S_0^{mis}\neq \emptyset$, impute $S_0^k$ at random for $k=1,...,K$, pick $A_0$ at random, observe $R_1$\\
\WHILE{$t\geq 0$ and not [stop criterion]}
\item Observe $S_{t+1}^{obs}$
\FOR{$k=1$ \textbf{to} $K$}
\item  Draw $S_{t+1}^{mis,k} \sim \hat{T}_t(\bullet|S_{t+1}^{obs},S_t^k,A_t)$
\item Set $S_{t+1}^k = (S_{t+1}^{mis,k},S_t^{obs})$
\ENDFOR
\item Set $\pi_{t+1}(s)=\argmax_a Q_{t}(s,a)$ for $s\in \mathcal{S}$
\item Set $A_{t+1}^{k} = \pi_{t+1}(S_{t+1}^k)$
\item Select  $A_{t+1}\sim \text{Unif}(A_{t+1}^{1},...,A_{t+1}^{K})$. With probability $\epsilon$, instead select $A_{t+1}$ uniformly at random over $\mathcal{A}$
\item Observe $R_{t+1}$

\FOR{$k=1$ \textbf{to} $K$}
\item Set $Q_{t+1}^{(k)}(S_t^{k},A_t) = Q_{t+1}^{(k-1)}(S_t^{k},A_t) + \frac{\alpha}{K} \ TD_\gamma(S_t^k,A_t,R_{t+1},S^k_{t+1})$ \COMMENT{Fractional Q-update}
\ENDFOR
\FOR{$k=1$ \textbf{to} $K$}
\item Set $n_{S_{t}^k,A_t,S_{t+1}^k}^{(t+1)} =  n_{S_{t}^k,A_t,S_{t+1}^k}^{(t)} + \frac{1}{K}$
\item Set $n_{S_{t}^k,A_t}^{(t+1)} =  n_{S_{t}^k,A_t}^{(t)} + \frac{1}{K}$
\item Set $\hat{T}_{t+1}(s'|a,s) = \frac{n_{s,a,s'}^{(t+1)}}{n_{s,a}^{(t+1)}}  \hspace{.3cm}\forall s,a,s'$ \COMMENT{Synthetic-Data T-update}
\ENDFOR
\ENDWHILE
\end{algorithmic}
\caption{\color{black} Fractional Q- and T-learning with Multiple-Imputation (Synthetic-Data Transition Model Updating)}
\label{algo:fractional_q_learning}
\end{algorithm*}

\subsection{Baselines}

We compare the MI variants described above to three simple baseline methods:

\vspace{-.2cm}
\begin{enumerate}
    \setlength{\itemsep}{-0.2em}
    \item \textbf{Random Action}: given any missingness, take a random action with uniform probability over $\mathcal{A}$. Only update $Q$ for fully observed tuples $(S_t,A_t,R_{t+1},S_{t+1})$
    \item \textbf{Last Observed State--V1}: Given  the last fully observed state $S_t$, if $S_{t+k}^{mis}\neq \emptyset$, use $S_t$. Use imputation in Q update.
    \item \textbf{Last Observed State--V2} fill in each element of $S_{t+k}^{mis}$ using the most recently observed value of that dimension and keep $S_{t+k}^{obs}$. Use imputation in $Q$ update.
\end{enumerate}

\vspace{-.2cm}

These methods are fast, simple, and sometimes used in practice. 
 In contexts with low missingness, they may be reasonable, but each carries risks. Random actions may result in taking very risky actions too frequently. V2 of the last observed state option  may result in imputed states which are actually impossible but could work well if state dimensions are fairly independent and do not have reward interaction effects. V1 risks being more severely outdated.  Still,  outside of RL, there is some work suggesting diminishing returns to complex imputation, so it is important to understand whether simpler methods can also do well \cite{morvan2024imputation}.%

\subsection{Treating Missing as a State}\label{sec-mis-as-state}
We also consider treating `missing' as another state value so that the state space gains variants such as $(s_1,s_2,?), (s_1,?,?),$ and $ (?,?,?)$ and Q-learning can proceed as usual.  Under MCAR and MAR, this is in principle unnecessary and costly, as it expands the state space dimension and may discard information about likely values of the missing data. In some NMAR scenarios, it may be desirable, as \textit{the fact of missingness} can be a signal and the optimal action under missingness may differ from the one under more observed versions of the state. In Appendix \ref{sec-misasstate}, we illustrate in a contextual bandit example that the utility of the missing-as-state approach may also depend on the missingness rate -- performing well if missingness is very low or very high. Under low missingness, performing well is easy, and under high missingness, we hypothesize that the missing-as-state effectively reduces the dimension of the state space to a few common $?$ scenarios, making learning faster. %

\section{Experiments} \label{sec-experiment}

\begin{figure*}[!t]
    \centering
    \includegraphics[width = 4.2cm]{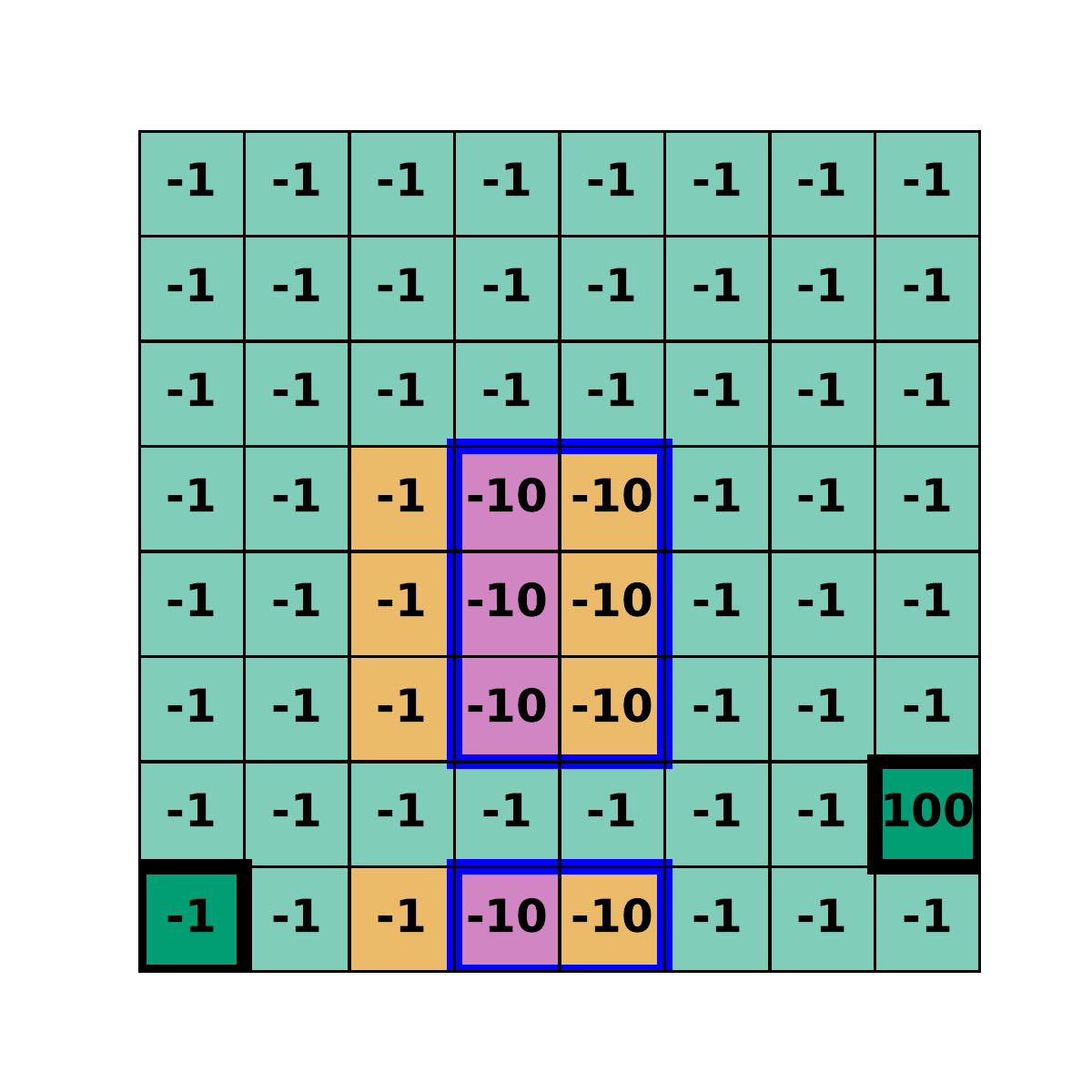}     \includegraphics[width = 4.2cm]{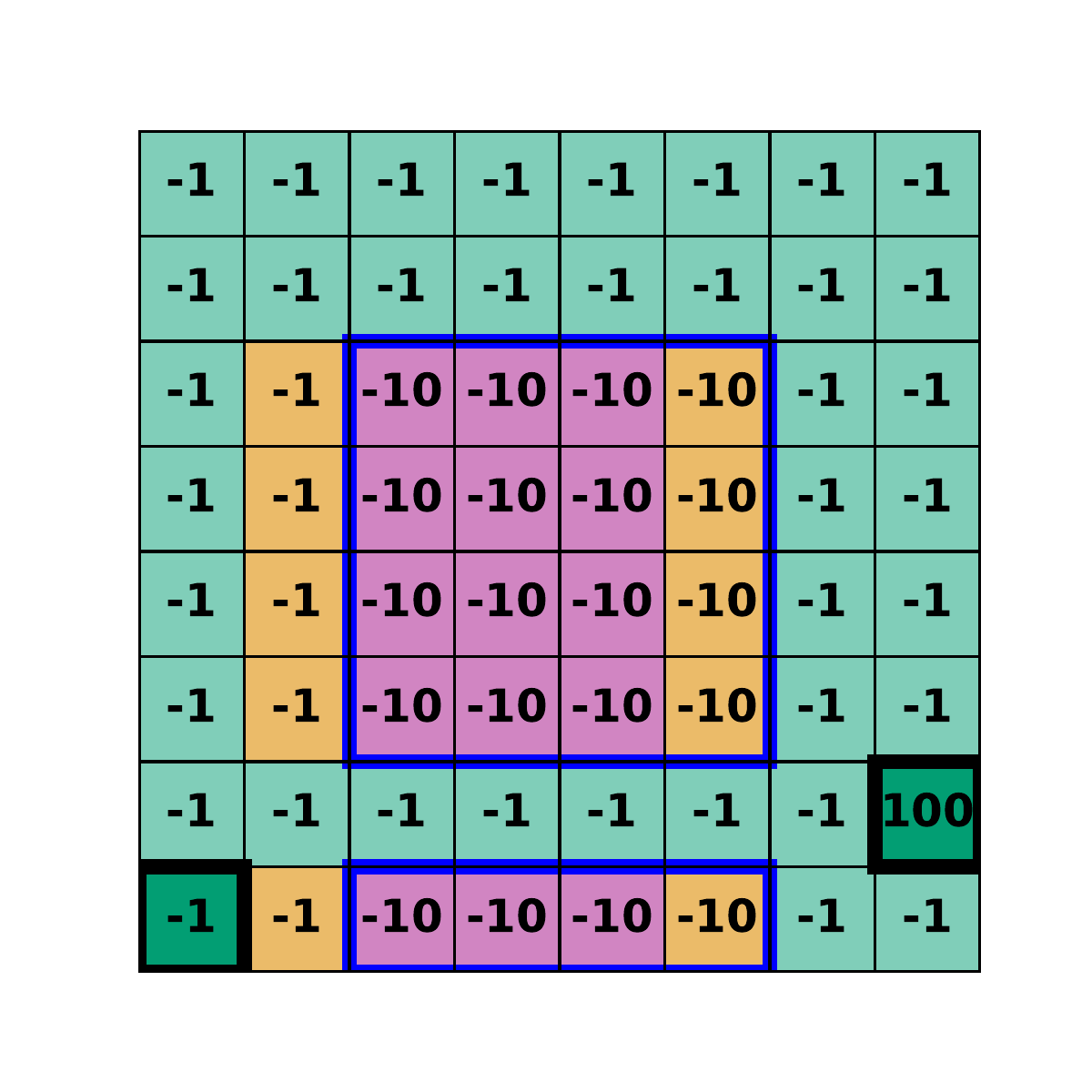}
   \includegraphics[width = 4.2cm]{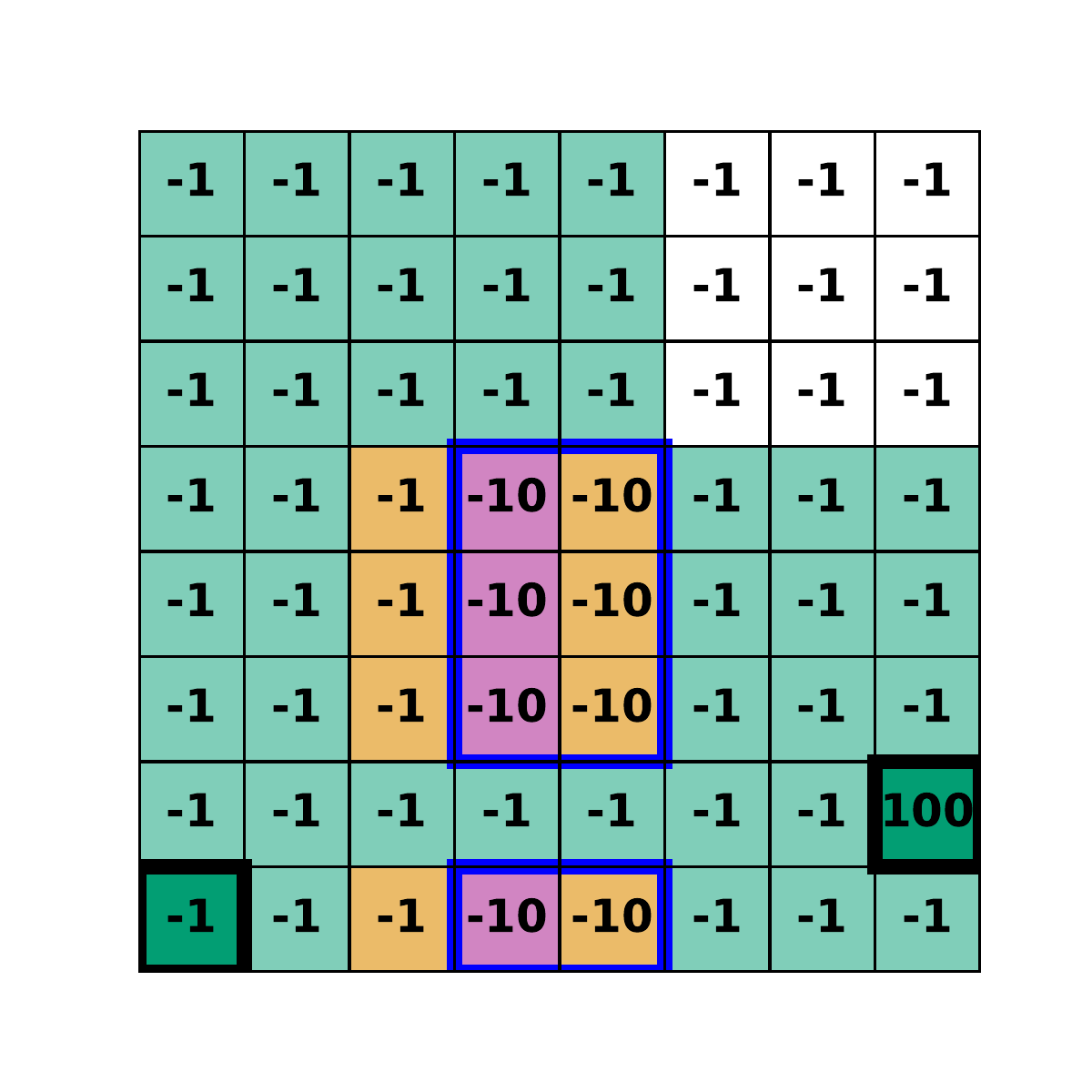}
   \includegraphics[width = 4.2cm]{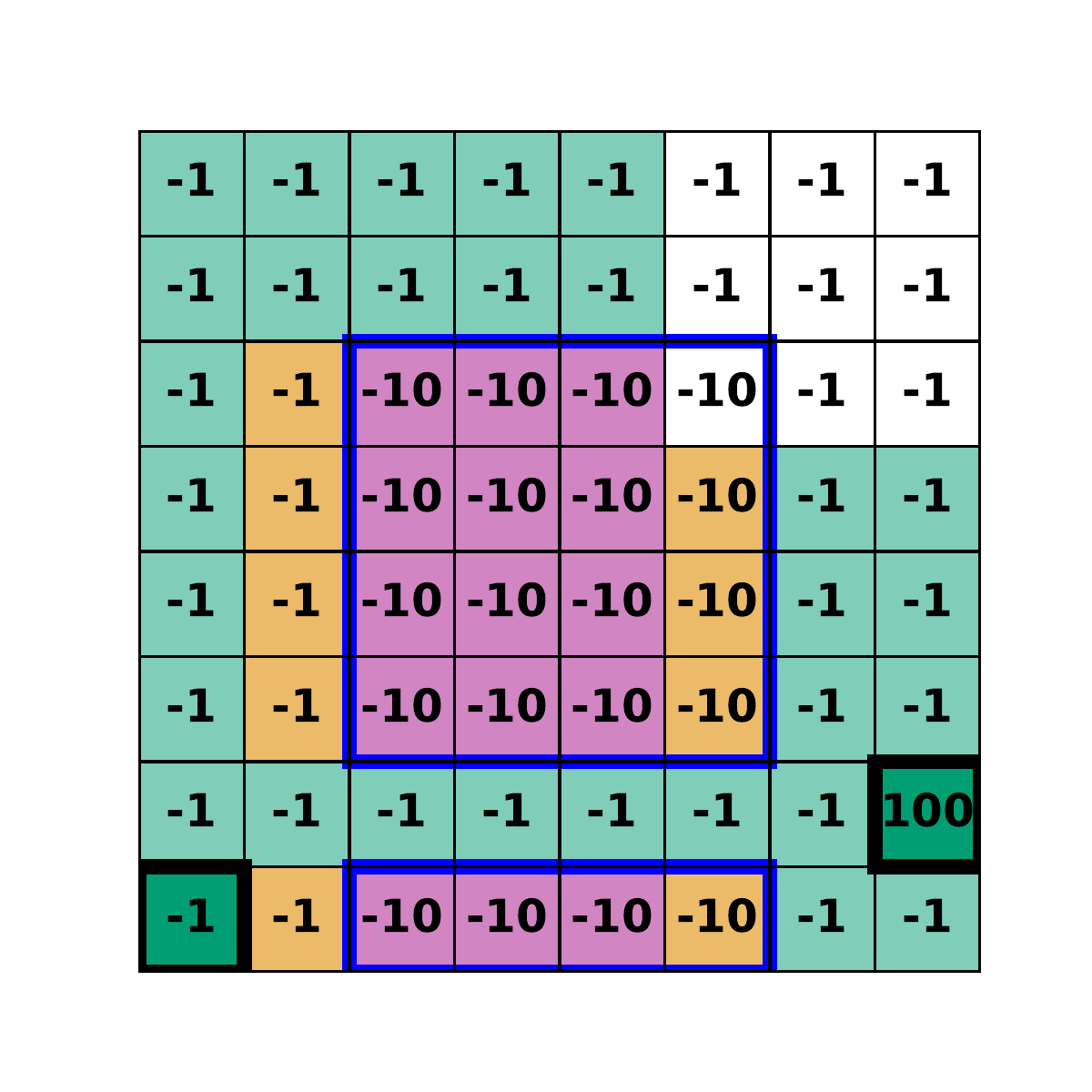}
    \caption{\footnotesize{Illustration of Grid Worlds. From left to right: no flooding nor fog; flooding and no fog; no flooding and fog; flooding and fog. The ``water" area  is outlined in bold blue. The dark green and outlined in black states are start (left) and terminal (right) states. \textbf{Regarding fog}: the 3x3 white region in the upper-right corner of the third and fourth panels indicate the presence of fog: states enshrouded in the fog have a higher probability of missingness under MFOG. To clarify, ``white" is \textit{not} a possible value for $S_{t,3}$. The underlying $S_{t,3}$ values of all states in the third and fourth panels are the same as the corresponding values shown in the first and second panels, respectively.}}
    \label{fig-gridworld}
\end{figure*}

\subsection{Grid World Testbed}

To test these ideas, we create a $8\times 8$ Grid World (Figure \ref{fig-gridworld}). The state space consists of all $(x,y,c)$ where $(x,y)$ are the location in the grid and $c \in \{\text{red},\text{orange},\text{green}\}$. To clarify, in other words, $S_{t,3}$ is a \textit{categorical color}.
The action space contains steps in all 8 directions, including diagonals (left, up, left-up etc.) and we experiment with allowing an additional action, a stay-in-place action. The agent always starts at a fully observed location --- the bottom-left corner of the grid --- and an episode ends when the agent reaches a terminal state on the right $(+100)$, which sends the agent back to the start. Steps incur a negative reward of $-1$, so the goal is to reach the terminal state as fast and often as possible. 

An added danger is stepping in a pond (-10 reward). Color signals  the danger of the current state and of taking a step right. Green means the agent is safe and moving right is safe. Red means the agent is not safe and moving right is not safe. Orange means either the agent is safe but moving right is dangerous, or the agent is in danger but moving right is safe (the agent must learn a correlation with the $x$ coordinate to use this color properly). Similar to Example 6.6 in \citet{SuttonBarto2018}, the shortest path is to cross a bridge but a safer path is to go around, especially since we make the state space stochastic by adding  Markovian \textbf{flooding} (Figure \ref{fig-gridworld}) that transitions with probability $p$ and \textbf{wind} which, with some probability, perturbs the action by $1$ unit within the $3\times 3$ grid of possible actions (e.g., up could become up+left).

We consider three missingness mechanisms. (1) In the \textbf{MCAR} case, each dimension ($x$, $y$, color) of $S_t$ is independently missing with $M_{t,j} \sim \text{Bernoulli}(\theta_j)$ for $j=1,2,3$. In our experiments, we set $\theta_j=\theta$ and vary this single parameter. For example, given $\theta$, the probability that at least one state is missing is $1-(1-\theta)^3$. (2) Under \textbf{MCOLOR}, the missingness rate depends on the color of the state, with higher rates for orange and red states (danger zones). If color itself is always observed, this is MAR whereas if color can also be missing, it is NMAR. (3) Under \textbf{MFOG}, the upper corner of the grid has a higher missingness rate than the rest of the grid (see Section \ref{sec-mfog}). This is NMAR.  

All code for implementing this grid world and our methods is available on request.

\subsection{Results}

\begin{figure*}[!t]
    \centering
    \includegraphics[width = 15.5cm]{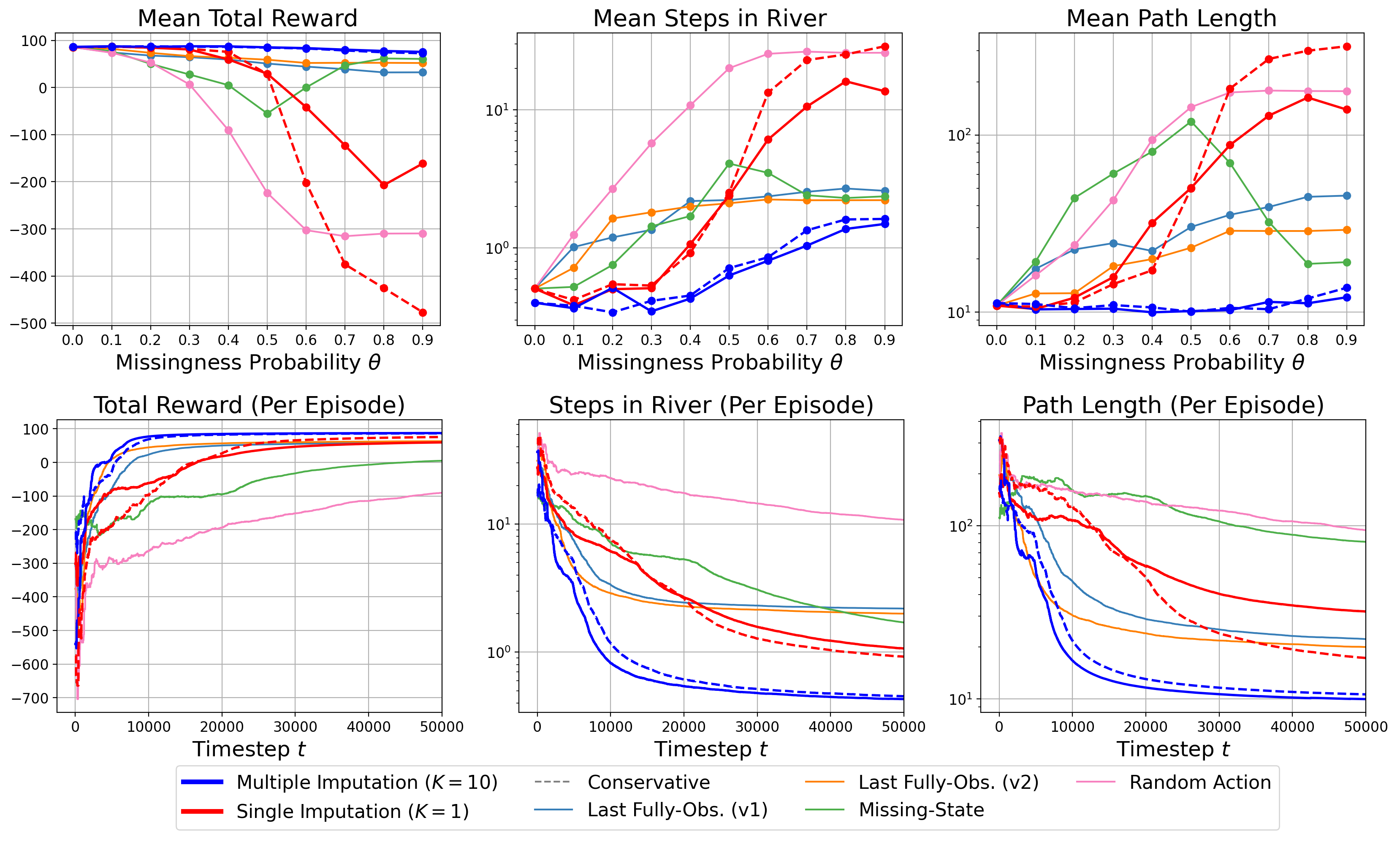} 
    \caption{\footnotesize{%
    \textbf{Top}: Comparison of mean performances over 50000 timesteps for all methods under MCAR with increasing missingness. \textbf{Bottom:} Performance over time for $\theta=0.4$. The y-axis in each case is the cumulative mean per episodes at time $t$ of the given metric. The environment is set to $P(\text{wind}) = 0.1$, $P(\text{flood\  transition}) = 0.1$. Each method shown for its best $\epsilon,\alpha,\gamma$, and action space option (stay in place allowed or not). Lines represent an average over 5 trials. Note the log scale for the center and right plots. The dashed blue (or red) line represent our multiple (or single) imputation model with ``conservative" learning of the transition model for imputations.}}
    \label{fig-mcar}
\end{figure*}

 We report the performance of each method for its best choice of $\epsilon \in \{0,0.05\}$, $\alpha \in \{.1, 1\}$, $\gamma \in \{0, 0.5, 1\}$ and whether to allow a stay-in-place action or not. We also set the flood transition probability and the probability of wind to $0.1$. For our MI/SI methods, we try  $K \in \{1, 5, 10 \}$.

\subsubsection{MCAR Results}

The top plot of Figure \ref{fig-mcar} shows that as expected, for low rates $\theta$ of MCAR missingness, all baselines, single imputation, and multiple imputation variants perform similarly. \textbf{As the missingness rate increases, all methods perform worse, but MI with $K=10$ outperforms single imputation and baselines across all three metrics and is quite robust}. Although the conservative T-update version learns slightly more slowly than the synthetic T-update version, the two are similar and both out-perform the rest.

For mean total reward, the missing-as-state and last-fully-observed-state baselines are only a bit worse, while in terms of steps in the river and path length, they are substantially worse (note the plots are on a log scale), suggesting a decline in ability to precisely navigate the water obstacle and bridge. Version 2 of the last-observed-state method is slightly better, which makes sense since it preserves observed information. Unsurprisingly, random action is quite bad. More interestingly, SI, though  better than baselines for $\theta \leq 0.4$, is horrible for higher missingness. For the synthetic T-update, this suggests concerns about path dependency may be well-founded. For the conservative T-update, this likely reflects that for high missingness, the transition matrix is  rarely updated. The bottom plot of Figure \ref{fig-mcar} shows performance over time for $\theta =0.4$, a setting where SI still slightly outperforms some baselines in the end. The plots highlight that even then, SI learns slower than the MI and some baselines, while random action and missing-as-state learn slowest of all. Further plots showing the learning dynamics at different $\theta$'s are available in Appendix \ref{appendix-mcar}.

The \textbf{missing-as-state baseline} (Section \ref{sec-mis-as-state}) displays an interesting $U$-shape relationship to $\theta$ in Figure \ref{fig-mcar}, performing worse as it increases but then better again for the highest $\theta$. We suspect that for low missingness, the expanded state space dimension makes learning slower while providing little gain since in MCAR, missingness is uninformative. But for high missingness, missing-as-state effectively lowers the dimension of the state space, as states such as $(x,?,?)$ become most common, and we have little joint $(x,y,c)$ information to help impute (though MI do still dominate).

\subsubsection{MCOLOR Results}

Appendix \ref{app-mcolor} and Figure \ref{fig-ColorFog} show performance under four MCOLOR settings. In each, we set the missingness rates for green $(\theta_{g})$, orange $(\theta_{o})$ and red $(\theta_{r})$ so that the ``danger signals" (orange and red) have higher missingness rates. Two settings are MAR with color never missing and the $(x,y)$ missing at $(\theta_{g}, \theta_{o},\theta_{r})= (.1,.2,.3)$ or $(.2,.4,.6)$. Two settings are NMAR with the same pattern only now color is also missing at that rate. The patterns from MCAR generally hold for MCOLOR as well, with MI consistently out-performing the others and again more robust as missingness increases. This dominance is again greater for higher missingness (see e.g.,  Figure \ref{fig-ColorFog}). Learning is slightly slower for the NMAR cases, but it is unclear if this is specific to this being NMAR or just to there being more missingness. Missing-as-state performs badly, but we did not push missingness high enough here to confirm the $V$-shape behavior.

\vspace{-.2cm}
\subsubsection{MFOG Results} \label{sec-mfog}

Appendix \ref{app-mfog} and Figure \ref{fig-ColorFog} show the performance of two MFOG settings, one where there is missingness in both the fog ($\theta_{in}=0.25$) and non-fog ($\theta_{out} = 0.1$) regions, and one with missingness only in the fog region with $\theta_{in}=0.5$. In the first case, MI still dominates. But when missingness is only in the fog region, missing-as-state performs on par with or better than MI with synthetic T-updates, though MI with conservative T-updates still performs well. This makes sense because in this setting, missingness signals location. Missing-as-state can learn this, while MI may go wrong, imputing the missing location to be outside the fog region because it has not often observed that region. The synthetic exacerbates the issue, as we discuss next.

\begin{figure*}[!t]
    \centering
    \includegraphics[width = 16cm]{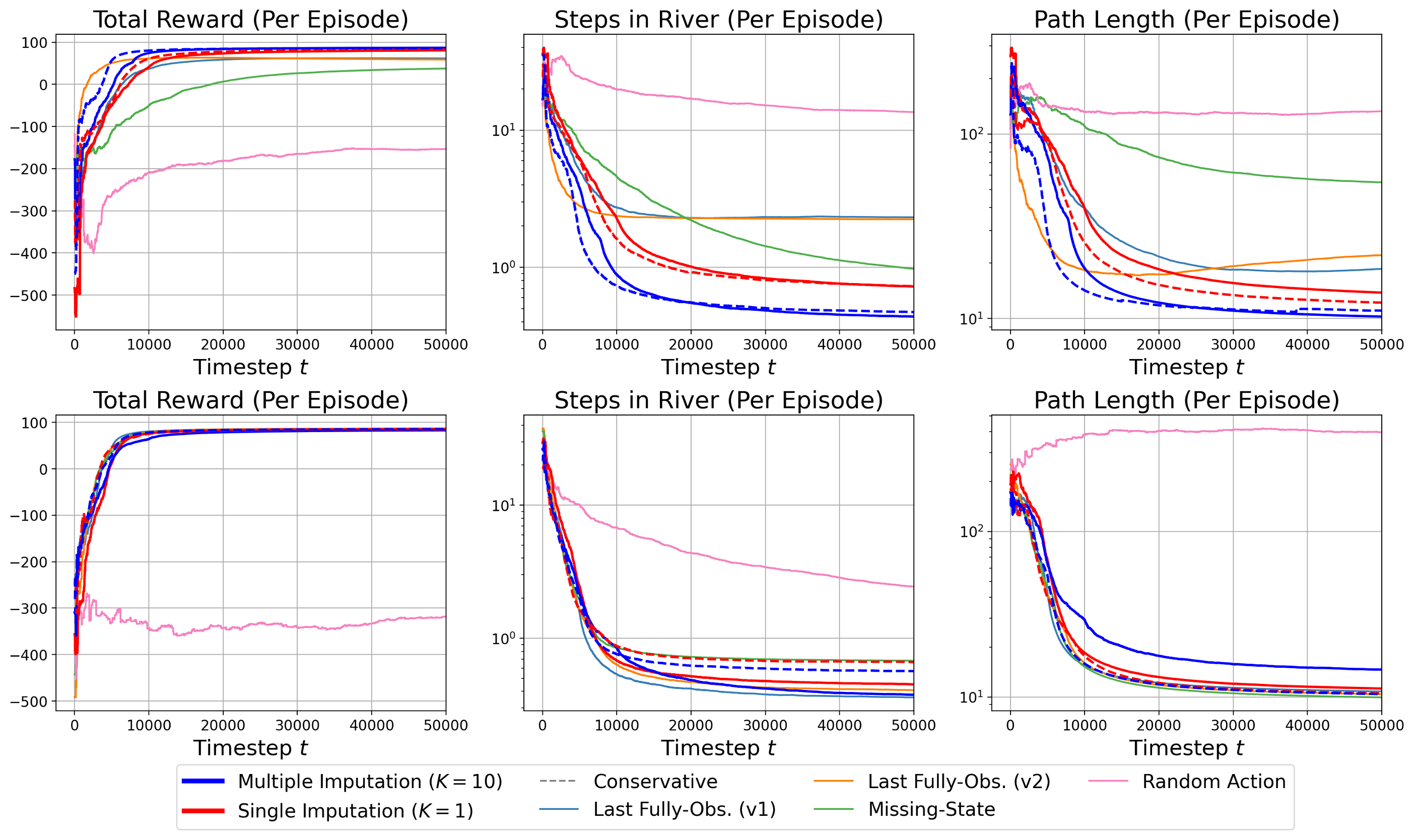} 
    \caption{\footnotesize{\textbf{Top}:  Performance over time for MCOLOR with missingness rates of $0.2$ (when in green), $0.4$ (when in orange), and $0.6$ (when in red) for all three of $x,y$ and color. \textbf{Bottom}: Performance over time for MFOG with missingness rate of $0.5$ inside the fog region and $0$ outside. The y-axis in each case is the cumulative mean per episodes at time $t$ of the given metric. The environment is set to $P(\text{wind}) = 0.1$, $P(\text{flood\  transition}) = 0.1$. Each method shown for its best $\epsilon,\alpha,\gamma$, and action space option (stay in place allowed or ot). Lines represent an average over 5 trials. Note the log scale for the center and right plots.}}
    \label{fig-ColorFog}
\end{figure*}

\subsubsection{Comparing Multiple Imputation Variants}

\textbf{Conservative vs Synthetic T-update}: in general, we observed that the conservative and synthetic update MI approaches performed similarly, suggesting that the synthetic T-update may not add much, perhaps because the imputations reflect no additional independent auxiliary information \citep{Meng1994, XieMeng2017}. For MCAR with higher missingness, the conservative update learned slightly slower but performance was similar. For MCOLOR, they were again similar, though conservative learned slightly faster. It is not clear if these rate differences, which are averaged over 5 trials, are significant. As described above, in the MFOG case with missingness only in the fog region, it appears  the use of synthetic updates was harmful. The example highlights the danger that, especially in NMAR settings, synthetic T-updates may reinforce incorrect data and weaken the signal of rare observed transitions in the missing region. This risk may outweight any benefit of leveraging some partially observed information for the updates.

\textbf{Choosing K:} In line with the existing MI literature, we find diminishing returns to increasing $K$, with $K=5$ outperforming $K=1$ and the gain for $K=10$ less dramatic (Appendix \ref{app-choose K}). However, higher dimensional state-action spaces could require higher $K$. There is nothing special about $K=10$.

\textbf{Mixing:} We did not find evidence that mixing was useful for our test cases. It made little difference in performance (Appendix \ref{app-choose K}). It may be that other features of our method already mitigate path dependence. Mixing could be more important in high dimensional settings where paths can diverge into more extreme territory.

\subsubsection{Stay in Place}\label{sec-stayinplace}

In Appendix \ref{app-stay-in-place}, we examine whether allowing a ``stay in place" (SIP) action makes a difference. We see no consistent difference in performance. As discussed below, stay in place actions could be beneficial in missing data contexts, but our algorithm does not leverage it optimally here, as reflected in the fact that the greedy algorithm simply learns not to use it.

\section{Discussion}
\label{sec-discussion}

The Grid World experiment provides an initial proof of concept suggesting that MI could be a tool for efficient online RL with missing data. \textbf{In the experiment, MI ensembles generally out-performed single imputation and simple baselines} in terms of maximizing reward, minimizing steps in the river, and minimizing path length. This held across missingness mechanisms and rates. The MI ensembles were more robust, performing well for longer as we increased the rate of missingness, and it learned faster. Superior performance generally held for both the conservative and synthetic versions of the T-update, with neither dramatically better than the other.  However, one experiment suggested the synthetic updates carries risks in NMAR settings. Examining different missingness mechanisms also helped surface the limitations and possible advantages of treating missingness as a state. Future work could explore combining the two approaches, for example by augmenting the state to include both imputations and information about the missingness structure.

These results are preliminary and the methods require further testing in a variety of settings to see whether they generalize, including to non-Grid World settings, infinite horizon settings with no terminal state, and more pathological NMAR scenarios. Of particular interest for future work is how performance varies with environment stochasticity. Stochasticity makes imputation harder but also makes it harder to take an optimal action so that wrong imputations could be less costly. On the other hand, missingness adds further stochasticity so correct imputation could be all the more important for eking out what little signal there is.

Though we focus on the low-dimensional tabular setting where $T$-probabilities and Q-functions can feasibly be captured in a table, the general procedure can be applied with different $Q$  and $T$ updates (e.g., model-based, e.g., incorporating knowledge of the missingness mechanism), and the design choices we highlight are generally relevant. We have also highlighted some additional design possibilities, such as incorporating multiple transition models into the assemble or adapting MICE to this setting to learn conditional transitions of the form $S_{t+1,j}\mid S_{t+1,-j},S_{t},A_t$ rather than full joint transitions.

As noted in Section \ref{sec-stayinplace}, the `stay in place' action made little difference in our experiment, but in general, this action represents another interesting avenue for further work. Staying in place represents an ``information gathering" action but not necessarily a safe or high reward one. Intuitively, given missing information, information-gathering could be a valuable thing to do, especially if we allow the agent to \textit{accumulate} information over time steps while staying in place.\footnote{That said, not all environments allow a ``staying in place" action as literally as in a Grid World.} More broadly, Q-learning and other standard algorithms are not designed to target informative actions or navigate when to act on partial information and when to wait. It would be interesting to consider how to combine information-theoretically-motivated algorithms such as those described in \citet{russo2016information, russo2018learning} with the MI ideas described here to deal with missingness. A challenge is that it is not just a matter of more fruitfully navigating the exploration-exploitation trade-off while exploring less over time -- information gathering becomes an action we need to  consider continually.

In the end, imputation methods are only useful if they also work on real data. Though it is hard to find real test cases with ground truth information on missingness, such examples might be artificially created on more realistic data. The key challenge is to figure out which missing data mechanisms are realistic. It is currently hard to obtain a comprehensive view of what kinds of missingness rates and dynamics are realistic and prevalent in real-life RL. In parallel to methodological developments like ours, it would be valuable to fill in that piece of missing metadata.

\bibliographystyle{ICML_stylefiles/icml2023}
\bibliography{references.bib}

\onecolumn 
\appendix

\section{Theory Appendix}

Throughout this section, we assume a tabular setting where the state space takes $C<\infty$ possible values and there are finitely many possible actions $|\mathcal{A}|<\infty$. A missing state is denoted by $s=?$. The goal  is to build intuition for our approach in more tractable cases.

\subsection{Justification for Fractional Updates}\label{sec-fracupdate}

Equation \eqref{eq-tudpate} in the main text uses a fractional count. Here, we argue this is needed to ensure proper normalization. Assume an MDP with transition $T(s'|s,a) \sim \text{Categorical}(p_{1|s,a},...,p_{C|s,a})$. Consider learning $p_{c|s,a}$ for a given $(s,a,c)$. We also let each be one dimensional so that a state is either missing ($s=?$) or not $(s\neq ?)$. Define the following variables:

\begin{table}[H]
    \centering
    \renewcommand{\arraystretch}{1.5}
    \begin{tabular}{c|l}
      $n_{s,a}^t$ & Count of fully observed $(s,a)$ after observing $S_t$ but before $A_t$ \\
      $n_{s,a,c}^{obs,t}$  & Count of fully observed $(s,a,c)$  after observing $S_t$ but before $A_t$    \\
      $n_{s,a}^{obs,t}$ & Count of fully observed $(s,a,s')$ for any $s'\neq ?$ after observing $S_t$ but before $A_t$\\
      $n_{s,a}^{mis,t}$ & Count of  $(s,a,?)$ after observing $S_t$ but before $A_t$  \\
    \end{tabular}
    \caption{For theoretical simplicity, these counts exclude any cases where the sequence is $(?, a, ?)$. They are defined to be calculable after observing $S_t$ but before taking action $A_t$. Note that $n_{s,a}^t = n_{s,a}^{obs,t}+n_{s,a}^{mis,t}$.}
    \label{tab:notation}
\end{table}

\subsubsection{Oracle Imputation}
 
Suppose that at step $t$, if $S_t=?$ and we need to impute some $S_t$ and take an action $A_t$ given a previous $(s,a)$, we are able to guess the missing state by receiving an independent draw from the true $\text{Categorical}(p_{1|s,a},...,p_{C|s,a})$ distribution, which we then use to take an action. Suppose we also want to estimate $p_{c|s,a}$, though for now we will not use that estimate for imputation. Let $X_{t,k} \simiid \text{Bernoulli}(p_{c|s,a})$ indicate if this draw is equal to $c$. Consider the estimate:

\begin{align*}
    \hat{p}_{c|s,a}^{t,(1)} &= \frac{n_{s,a,c}^{obs,t-1} + \frac{1}{K}\sum_{j < t}\sum_{k=1}^{K} X_{j,k}1(S_{j-1}=s,A_{j-1}=a, S_{j}=?) }{n_{s,a}^{t-1}}
\end{align*}

Where we want the estimate to be available before observing (or failing to observe) $S_t$, so the denominator counts only for completed tuples, for which the most recent is then $S_{t-2},A_{t-2},S_{t-1}$.

If missingness is MAR given $s,a$ with fixed probability $p_o$ this estimator is consistent for $p_{c|s,a}$ as $t\rightarrow \infty$ by the Law of Large Numbers\footnote{The standard LLN is applicable since under the MDP, conditional on $(s,a)$, the true $s'$ and imputed $s'$ are each drawn i.i.d.} and Slutsky,

\begin{align*}
    \hat{p}_{c|s,a}^{t,(1)}
    &= \frac{n_{s,a,c}^{obs,t-1}}{n_{s,a}^{obs,t-1}}\hat{p}_0 + \frac{\sum_{j< t} \sum_{k=1}^{K} X_{j,k}1(S_{j-1}=s,A_{j-1}=a, S_{j}=?)}{Kn_{s,a}^{mis,t-1}} (1-\hat{p}_0) \\
    &\xrightarrow{p} p_{c|s,a}p_o + p_{c|s,a}(1-p_o) = p_{c|s,a}
\end{align*}

where $\hat{p}_o = \frac{{n_{s,a}^{obs,t-1}}}{n_{s,a}^{t-1}}$. Notice that if we were to remove the $\frac{1}{K}$, this would converge instead to $p_{c|s,a}p_o + Kp_{c|s,a}(1-p_o)$. Of course,  we are getting free lunch that is impossible in practice: the ability to generate more  new data using the imputations. Consistency is unsurprising and the estimator will be more efficient than the observed-tuple-only estimator,

\begin{equation*}
    \hat{p}^{t,(2)}_{c|s,a} = \frac{n_{s,a,c}^{obs,t-1}}{n_{s,a}^{obs,t-1}},
\end{equation*}
which is also consistent under MAR. 

\subsubsection{Estimate Based Imputation}

Suppose now that we use the latest $\hat{p}_{c|s,a}^{t,(2)}$ to produce the imputations that we act on. This means the imputation distribution changes with $t$. This is in line with the conservative transition update mentioned in the text.

Imagine again that we also form an estimate $\hat{p}_{c|s,a}^{t,(3)}$ using these imputations in a similar manner to $\hat{p}_{c|s,a}^{t,(1)}$, though we do not use this estimate itself for imputation. That is, consider

 \begin{align*}
    \hat{p}_{c|s,a}^{t,(3)} &= \frac{n_{s,a,c}^{obs,t-1} + \frac{1}{K}\sum_{j<t}\sum_{k=1}^{K} X_{j,k}1(S_{j=1}=s,A_{j=1}=a,S_j=?)}{n_{s,a}^{t-1}},  \hspace{.5cm}   X_{jk} \simiid \text{Bernoulli}(\hat{p}_{c|s,a}^{j,(2)})
\end{align*}

Suppose we let $K=\infty$. Then the estimator becomes:

\begin{align*}
    \hat{p}_{c|s,a}^{t,(3)} 
        &= \hat{p}_o*\hat{p}_{c|s,a}^{t,(2)} + (1-\hat{p}_o)*\frac{\sum_{j<t}\hat{p}_{c|s,a}^{j,(2)}1(S_{j-1}=s,A_{j-1}=a, S_{j}=?) }{n_{s,a}^{mis,t-1}}
\end{align*}%

\noindent The right term is an average of a sequence of averages $\hat{p}_{c|s,a}^{t,(2)}$, each of which is unbiased and consistent for $p_{c|s,a}$ under MAR. We also have $\hat{p}_0 \xrightarrow{p}p_0$. Hence  $\hat{p}_{c|s,a}^{t,(3)} \xrightarrow[t\rightarrow\infty ]{p} p_{c|s,a}$. %
However, this estimator is less efficient than simply using $\hat{p}_{c|s,a}^{t,(2)}$. It only adds a noisier component based on previous noisier estimates, as the next theorem proves in the case where $p_o$ is known (estimating it should only add more variance) and we average over all previous $\hat{p}_{c|s,a}^{j,(2)}$ (averaging over fewer should also increase variance). 

\begin{theorem} \label{thm-variance-pcsa}
    Let $X_1,...,X_t \iid \text{Bernoulli}(p)$ with $\sigma^2 := p(1-p)$ , let $p_o$ be known,  and let $\hat{p}_j$ be the average of $X_1,...,X_j$, with variance $\frac{\sigma^2}{j}$. Define:

\[\tilde{p}_t = p_o \hat{p}_t + (1-p_o) \frac{\sum_{j< t}\hat{p}_t}{t}\]

Then as $t\rightarrow\infty$, $\frac{V(\tilde{p}_t)}{V(\hat{p}_t)} \rightarrow \infty$.

\end{theorem}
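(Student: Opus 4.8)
The plan is to keep the estimator exactly as written, $\tilde p_t = p_o\hat p_t + (1-p_o)\bar S_t$ with $\bar S_t := \tfrac1t\sum_{j<t}\hat p_j$, and to reduce the claim to a single variance-order statement. Since $p_o$ is a known constant and $V(\hat p_t)=\sigma^2/t$, the ratio equals $\tfrac{t}{\sigma^2}\,V(\tilde p_t)$, so proving $\frac{V(\tilde p_t)}{V(\hat p_t)}\to\infty$ is the same as proving $t\,V(\tilde p_t)\to\infty$, i.e. that $V(\tilde p_t)$ decays strictly more slowly than $1/t$. The component $p_o\hat p_t$ contributes variance of the benign order $1/t$, so everything rests on showing that the averaging term $\bar S_t$, together with its covariance with $\hat p_t$, injects variance of strictly larger order than $1/t$.

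First I would rewrite $\bar S_t$ in terms of the underlying data rather than the $\hat p_j$, since the running means all share the single sequence $X_1,X_2,\dots$ and are therefore strongly dependent; they cannot be treated as independent. Swapping the order of summation gives $\sum_{j<t}\hat p_j=\sum_{i<t}(H_{t-1}-H_{i-1})X_i$, where $H_n=\sum_{m=1}^n 1/m$, so that $V(\bar S_t)=\tfrac{\sigma^2}{t^2}\sum_{i<t}(H_{t-1}-H_{i-1})^2$. Writing $\hat p_t=\tfrac1t\sum_{i\le t}X_i$ as well, $\tilde p_t$ becomes an explicit linear form $\sum_i w_i X_i$ with $w_i=\tfrac1t\big[p_o+(1-p_o)(H_{t-1}-H_{i-1})\big]$ for $i<t$ and $w_t=p_o/t$. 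Then $V(\tilde p_t)=\sigma^2\sum_i w_i^2$ is a deterministic weighted sum that already folds in the $\hat p_t$--$\bar S_t$ covariance, so no separate cross-term needs to be tracked.

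The decisive step, and the place I expect the real difficulty, is the asymptotic evaluation of $\sum_{i<t}(H_{t-1}-H_{i-1})^2$, since up to the known-constant corrections $t\,V(\tilde p_t)$ is of the same order as $\tfrac1t\sum_{i<t}(H_{t-1}-H_{i-1})^2$. The stated conclusion is exactly the assertion that this harmonic sum grows strictly faster than linearly in $t$ --- equivalently, that the strong positive correlation among the early, high-variance running means survives the $1/t$ normalization rather than averaging away. The low-index block carries weights $H_{t-1}-H_{i-1}\approx\log(t/i)$, and the crux is to decide whether this logarithmic up-weighting of the earliest terms lifts the sum above order $t$. This is precisely where a naive independence heuristic (which would produce only a $\log t$-scale contribution and hence a vanishing effect) is misleading, and the work lies in a careful second-moment estimate for a sum of heavily overlapping sample means, handled by comparing the harmonic weights to $\log(t/\cdot)$ and estimating $\sum_i(\log(t/i))^2$ by an integral. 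Feeding the resulting growth order back into $t\,V(\tilde p_t)$ then yields the conclusion, so the entire statement turns on whether this one sum is super-linear --- the single genuinely delicate quantity in the argument.
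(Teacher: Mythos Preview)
Your reduction is clean and correct up to the point where you stake everything on the growth of $\sum_{i<t}(H_{t-1}-H_{i-1})^2$. The trouble is that this sum is \emph{not} super-linear. A direct computation (expand the square and swap sums) gives the closed form
\[
\sum_{i=1}^{t-1}\bigl(H_{t-1}-H_{i-1}\bigr)^2
=\sum_{m,n=1}^{t-1}\frac{\min(m,n)}{mn}
=2(t-1)-H_{t-1},
\]
so the integral heuristic you propose, $\tfrac1t\sum_i(\log(t/i))^2\to\int_0^1(\log u)^2\,du=2$, is exactly right and yields linear growth $\sim 2t$, not anything larger. Plugging this (together with $\sum_{i<t}(H_{t-1}-H_{i-1})=t-1$) back into $t\sum_i w_i^2$ gives
\[
\frac{V(\tilde p_t)}{V(\hat p_t)}\;\longrightarrow\;p_o^2+2p_o(1-p_o)+2(1-p_o)^2\;=\;1+(1-p_o)^2,
\]
a finite constant strictly greater than $1$ when $p_o<1$. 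So your machinery, carried to completion, does not prove divergence; it shows the ratio converges. The ``genuinely delicate quantity'' you flagged is in fact benign, and the whole argument collapses at exactly that step.

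For comparison, the paper's own proof takes a completely different route: it observes that $\hat p_t$ and $\tilde p_t$ are both unbiased linear estimators in the i.i.d.\ Bernoulli model and invokes the fact that $\hat p_t$ is the unique MVUE. That argument is one line, but note that uniqueness of the MVUE only yields $V(\tilde p_t)\ge V(\hat p_t)$ (with strict inequality when $\tilde p_t\neq\hat p_t$); it does not by itself force the ratio to diverge. So your explicit calculation is actually more informative than the paper's appeal to MVUE --- it pins the limiting ratio down to $1+(1-p_o)^2$ --- but it contradicts rather than establishes the stated $\to\infty$ conclusion.
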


\begin{figure*}[h]
\begin{proof}[Proof of Theorem \ref{thm-variance-pcsa}]
This follows immediately from from the fact that $\hat{p}_t$ and $\tilde{p}_t$ are both unbiased estimators based on i.i.d. observations $X_1,...,X_t$ and $\hat{p}_t$ is the unique minimum variance unbiased estimator (MVUE). 
\end{proof}
\end{figure*}

This is related to the result in \cite{Meng1994} where, if no additional auxiliary information is injected into the imputation model, the multiple imputation based estimator is simply  `incomplete data estimator' as $K\rightarrow \infty$ and not give an advantage. In our case, even for $K=\infty$, $\hat{p}_{c|s,a}^{t,(3)}\neq \hat{p}_{c|s,a}^{t,(2)}$ because $\hat{p}_{c|s,a}^{t,(3)}$ is formed from imputations that were done sequentially. If at each time $t$, new imputations were drawn using $\hat{p}_{c|s,a}^{t,(2)}$ for all missing states at all previous times, then $\hat{p}_{c|s,a}^{t,(3)}$ at $K= \infty$ would simply be $\hat{p}_{c|s,a}^{t,(2)}$.

\subsubsection{Recursive Imputation and Estimation}

A third option most in line with the `synthetic' data approach in Section \ref{sec-transitions} is to use the form of $\hat{p}_{c|s,a}^{t-1,(1)}$ to draw imputations and then use those imputations as part of $\hat{p}_{c|s,a}^{t,(1)}$.

 \begin{align*}
    \hat{p}_{c|s,a}^{t,(4)} &= \frac{n_{s,a,c}^{obs,t-1} + \frac{1}{K}\sum_{j<t}\sum_{k=1}^{K} X_{j,k}1(S_{j=1}=s,A_{j=1}=a,S_j=?)}{n_{s,a}^{t-1}},  \hspace{.5cm}   X_{jk} \simiid \text{Bernoulli}(\hat{p}_{c|s,a}^{j,(4)})
\end{align*}

where for $K=\infty$, the estimator becomes

\begin{align*}
    \hat{p}_{c|s,a}^{t,(4)} 
        &= \hat{p}_o*\hat{p}_{c|s,a}^{t,(2)} + (1-\hat{p}_o)*\frac{\sum_{j<t}\hat{p}_{c|s,a}^{j,(4)} 1(S_{j-1}=s,A_{j-1}=a, S_{j}=?)}{n_{s,a}^{mis,t-1}}
\end{align*}

This creates a recursive relationship with complicated dependencies and should have even higher variance. In any case, it is still clear that the $\frac{1}{K}$ is necessary to ensure proper normalization and avoid the imputations dominating the observed instances.

\subsubsection{Takeaways}

We have shown that in this simple setting, fractional $K$ updates are necessary normalization. However, estimate of the transition distribution based on imputations does not confer advantages compared to estimating transitions using only the observed tuples. This makes sense, as the imputations do not inject new independent information. In line with \cite{Meng1994}, the point where we might expect imputations to become beneficial is when they allow us to incorporate additional information via partial observations we might otherwise ignore.  The next section explores this.

\subsection{Does Imputation Have Benefits for Learning? Comparison to Missingness as State}\label{sec-misasstate}

We suspect that the main benefit of imputation is not related to the missing state (which it may not always recover) but the way that it makes it possible to learn from partially observed states and their associated action-rewards. On the other hand, encoding missingness as a state and using states like $(s_1,?,s_3)$ also makes it possible to use partial information. The difference from imputation is that the missing-as-state approach does not leverage any correlations, learned from the longer history, that indicate what $s_2=?$ might represent and its consequences for the reward. Then again, if $s_2$ is very strongly correlated with $s_1,s_3$, it may be redundant information. So when should we prefer to impute versus treat missingness as a state? ( leaving aside for now the possibility of doing both)

In this section, we explore imputation and missing-as-state in a simplified contextual bandit set-up for a two-dimensional state space $(C,S)$  and action space $|\mathcal{A}|<\infty$. We assume $C\in \mathcal{C}$ is always observed and $S\in \mathcal{S}$ is sometimes missing. The contextual bandit simplifies theory  because the sequence of states $(C_{t},S_{t})$ is i.i.d., but we expect it to still reflect some of the dynamics of the MDP. Actions $A_t$ at each time step are taken according to some learning algorithm which may in general depend on the entire state-action-reward history $((C_j,S_j),A_j,R_j)_{j<t}$. We compare the following options.
\begin{enumerate}
    \item Impute $S$ using a learned distribution $p(s\mid c)$
    \item Add the states $\{(c,?):c\in \mathcal{C} \}$ to the state space
\end{enumerate}

\subsubsection{Building Intuition Through Four Scenarios}\label{scenarios}

 Which one is more advantageous depends on how well we can learn to predict $S$ using $C$ based on observed examples. Consider the following scenarios.

\begin{enumerate}
    \item \textbf{Highly predictive and MAR}: If $S$ is MAR and $C$ is very predictive of $S$, then (1) should perform better than (2) at least in terms of faster learning. To see this, imagine the extreme case where a certain $C=c$ always leads to $S=x$. On the one hand, then $(c,?)$ becomes a proxy for $(c,x)$, so it might seem like imputing $x$ versus using $?$ are equivalent. However, in the impute case, we get to pool the observe instances of $(c,x)$ \textit{and} the impute instances $(c,x)$ together and hence leverage them together when estimating expected rewards. In the missing as state $?$ instance, we separately tabulate the $(c,x)$ instances and the $(c,?)$ instances and consider the expected reward for each separately, meaning slower learning.

    \item \textbf{No predictive value and MAR}: As an opposite extreme, suppose MAR but $C_t\indep S_t$. In this case, we should not be able to learn any useful imputation model, so doing imputation only adds noise to our estimates of Q functions. Considering state $(c,?)$ is no panacea either, but now, these cases do not add noise to our Q estimates for fully observed cases. It makes sense to separate $(c,?)$ where the best we can do is learn the action that is best on average given $c$. 
   
    \item \textbf{NMAR but $?$ a deterministic signal}: The other extreme where missing-as-state would be the clear better option is if, given a certain $c$, whenever $S$ is missing, it takes a certain value $x$ that it never or rarely takes when $S$ is observed. In this case, the state $(c,?)$ is actually equivalent to $(c,x)$ and since $(c,x)$ is never or rarely directly observed, we do not really increase the state space dimension. Moreover, we avoid imputing values of $S$ based on observed $(c,s)$ which will be very off. This scenario is an extreme version of a missing not at random case.

    \item \textbf{General NMAR}: In general, if $C$ is predictive of $S$ but $S$ is NMAR, then the question of which approach is better \textbf{ depends.} We will learn a skewed imputation distribution and the question is whether that is a problem.  Consider an example where  $C=c$ implies $S\in \{x,y\}$. Suppose, however, that missingness is systematically higher if $S=y$. This means the observed $x,y$ rates are skewed and the imputation model too often imputes $(c, x)$. Yet this only a problem if the mean rewards for $(c,x), a$ and $(c,y),a$ and/or the ordering of these means rewards over $a$ are very different.  The fact that ultimately, we only need to pick an optimal action creates some buffer for bias created by incorrect imputations. On the other hand, using $(c,?)$ would lead to estimating a a pooled average of the rewards under $x$ and under $y$ together but without any ability to leverage fully observed cases. This will keep the estimated rewards for observed $(c,x)$ and $(c,y)$ cleaner but slow learning for  $(c,?)$.  Still, in NMAR scenarios, $(c,?)$ will sometimes  be a safer option.

\end{enumerate}

\subsubsection{Further Set-up and Notation}

In what follows, imagine we fix a value of $C$ throughout and let the $t$ indexing below be only over the observed $C=c$ cases. Because this is fixed throughout, we ignore $C$ in the notation below below, though it is still relevant in the sense that if $p_s$ below is the same for every $c$, there is no predictive advantage of leveraging observed information $C$. Every independence assumption below must be true conditional on every value of $C=c$ and we assume $C_t\indep C_j$ for $t\neq j$. We will suppose that we have some sequence of imputations $\hat{S}_t$ that satisfy an unconfoundedness assumption but are otherwise arbitrary.

\
\begin{table}[h!]
\centering
\begin{tabular}{ll}
\hline
\textbf{Notation} & \textbf{Description} \\
\hline
$p_s = \Pr(S = s \mid C=c)$ & Transition probability for $s \in \mathcal{S}$ \\
$x,y$ & Two distinct values in $\mathcal{S}$ \\
$M_t = 1(S_{t} = ?)$ & Missingness indicator at time $t$ \\
$M_{1:t} = (M_1, \dots, M_t)$ & Missingness history up to time $t$ \\
$A_{1:t} = (A_1, \dots, A_t)$ & Action history up to time $t$ \\
$S_{1:t} = (S_{1}, \dots, S_{t})$ & Sequence of $S$ states up to $t$ (observed or not) \\
$S_{1:t}^{obs}$ & Same as $S_{1:t}$ but with unobserved entries replaced by $?$ \\
\hline 
$\hat{S}_{t}$ & Imputation, $\hat{S}_j$ if missing, $?$ otherwise  \\ 
$\hat{S}_{1:t}$ & Imputation sequence up to time $t$ \\
$\hat{S}_{1:t}^k$ & A single chain of imputations \\
$\hat{S}_{1:t}^{1:K}$ & The set of all $K$ imputation chains \\ 
\hline
\end{tabular}
\caption{Summary of notation. Also, for any sequence, let a superscript with ${-j}$ denote it with the $j^{th}$ element removed.}
\end{table}

\textbf{Assumptions}

\begin{enumerate}
    \item \textbf{Contextual bandit}:  $S_t\indep S_{j}$ for all $t\neq j$ 

    \item \textbf{Rewards Observed}: the reward $R_t$ at time $t$ is an independent draw from some distribution $p(r\mid C_t=c,S_t,A_t)$

    \item \textbf{Missingness mechanism (MAR)}:  $M_t|C=c\iid Bernoulli(p_m)$ for some fixed $p_m \in (0,1)$. Note that this implies
$M_t \indep R_t(1),...,R_t(|\mathcal{A}|),S$.  Missingness may impact the action taken. We allow $p_m$ to be a function of $C$ in general but if not, this is also MCAR.
    
    \item \textbf{Unconfounded actions}: $A_{1:t} \indep R_{j}(1),...,R_{j}(|\mathcal{A}|)   \ \mid \ S_{j}=s$. This says that aside from $S_j$, there are no confounding variables affecting both the reward at time $j$ and the action at any past, present, or future time. This includes $A_{t} \indep R_{t}(1),...,R_{t}(|\mathcal{A}|)   \ \mid \ S_{t}=s$, which says that  given the current state, the action selection at time $t$ is independent of the potential outcomes at time $t$. %

\item \textbf{Action under missingness}: If $M_t = 1$, actions are independent of the true $S_t$; $A_t \indep S_{t} | M_t=1$ but may depend on the imputation $\hat{S}_{t}$. If $M_t=0$, the action is  independent of $\hat{S}_{t}$ (which we can imagine being drawn but ignored).

\item \textbf{Unconfounded imputations}: the imputation $\hat{S}_{t}$ at time $t$ is drawn at random using some probabilities  $\hat{p}_s$ which may change with $t$ and hence depend on the history or may be fixed throughout. The case where $\hat{p}_s = p_s$ (know the true probabilities) is a special case. The imputation may affect the action taken in the present (or future) but is independent of the current missing state and any potential outcomes. Imputations affect observed rewards only via the actions taken. 
\begin{align}
\hat{S}_t &\indep S_t \\
\hat{S}_{1:t} &\indep R_t(1), \dots, R_t(|\mathcal{A}|) \mid S_t \\
\hat{S}_{1:t} &\indep R_{1:t} \mid A_{1:t}, S_{1:t}
\end{align}

\item \textbf{Unobserved State Independence}: Given $M_j=1$, the true unobserved state $S_j$ is independent of all actions taken at time $t$ or any other time. $S_t \indep A_{1:k}|M_t=1   \hspace{.5cm} \text{ for any } k\geq 1$ It is also, as implied by the assumptions above, independent of all other states, imputations, missingness indicators, and observed rewards at times other than  $j$.

\item \textbf{SUTVA}: $R_i=R_i(A_i)$ and no interference, meaning the reward does not depend on the actions at any other times.
\end{enumerate}

\subsubsection{Estimators and Action Selection}

In a contextual bandit, we want to learn the mean rewards $\mu_{s,a} := \E[R(a)|S=s]$ for $s\in \mathcal{S},a\in \mathcal{A}$. Given these estimates, the greedy strategy is to pick the action with the highest estimated $\mu_{s,a}$. We ignore $\epsilon$-greedy steps for now, though the estimators below could also be used in that case. In a fully observed contextual bandit, each context produces an entirely separate bandit problem. If the state space can be missing, however, we face uncertainty as to which context we are in. Define the following estimators for all $s\in \mathcal{S}$:

\begin{align*}
    \mu_{s,a}^{{\color{blue}obs},t} &= \frac{\sum_{j=1}^{t}R_j1(S_{j}=s, A_{j}=a, M_j=0)}{\sum_{j=1}^{t}1(S_{j}=s, A_{j}=a, M_j = 0)}   \hspace{1.5cm} \mu_{?,a}^{{\color{blue}obs},t} = \frac{\sum_{j=1}^{t}R_j1(A_{j}=a, M_j=1)}{\sum_{j=1}^{t}1(A_{j}=a, M_j = 1)}  \\
    \\
  \\
  \mu_{s,a}^{{\color{blue}imp},K,t} &=  \frac{\sum_{j=1}^{t}R_j 1(S_{j}=s,A_{j}=a, M_j =0) + \frac{1}{K}\sum_{k=1}^{K}R_j1(\hat{S}_j^k=s,A_{j}=a, M_j =1)}{\sum_{j=1}^{t}1(S_{j}=s,A_{j}=a, M_j=0) + \frac{1}{K}\sum_{k=1}^{K}1(\hat{S}_j^k=s,A_{j}=a, M_j =1)}\\ 
\end{align*}

$\mu_{s,a}^{{\color{blue}obs},t}$  and $ \mu_{?,a}^{{\color{blue}obs},t}$ are a pair. Together, they only use the mean reward for observed cases and lump the rewards for unobserved cases into a separate estimator. In contrast, $ \mu_{s,a}^{{\color{blue}imp},K,t}$ additionally fractionally assigns rewards for imputed states to the mean for that imputed states. We consider three options for action selection. 

\begin{align}
   A_{t+1} &= \arg\max_{a} \mu_{s,a}^{{\color{blue}obs},t}  && \text{ where } s = \hat{S}_{t+1}M_i + S_{t+1}(1-M_i) \in \mathcal{S}\label{action-cons}\\ 
    A_{t+1} &= \arg\max_{a} \mu_{s,a}^{{\color{blue}obs},t}  && \text{ where } s = S_{t+1} \in \mathcal{S}\cup \{?\ \label{action-M}\\ 
   A_{t+1} &=  \arg\max_{a} \mu_{s,a}^{{\color{blue}imp},K,t}  && \text{ where } s = \hat{S}_{t+1}M_i + S_{t+1}(1-M_i) \in \mathcal{S} \label{action-MI}
\end{align}

The first option simply uses the mean from fully observed cases as if in the context indicated by the imputation. The second uses the mean that includes the imputations. The third ignores the imputation and uses the mean learned specifically for the $?$ case when needed. When $S_{t+1}$ is fully observed, $s$ is the same for all three but the means used may still be different.

\subsubsection{Results}
Below, we derive the means and variances of the mean estimators above, showing that under the assumptions above

\begin{align*}
    \E[\mu_{s,a}^{{\color{blue}obs},t}] &= \mu_{s,a}\\ 
    \E[\mu_{?,a}^{{\color{blue}obs},t}] &= \mu_{a} &&\text{ where } \mu_a = \sum_{s}\mu_{s,a}p_s\\
    \V(\mu_{s,a}^{{\color{blue}obs},t}) &= \sigma_{s,a}^2\E\left[\frac{1}{n_{s,a,o}}\right] && \text{ where } n_{s,a,o} = \sum_{j=1}^{t}1(S_j = s, A_j=a, M_j=0) \text{ and }\sigma_{s,a}^2=\V(R(a)|S=s)\\
     \V(\mu_{?,a}^{{\color{blue}obs},t}) &= \sigma_a^2\E\left[\frac{1}{n_{a,m}}\right] && \text{ where } n_{a,m} = \sum_{j=1}^{t}1(A_j=a, M_j=1) \text{ and } \sigma_a^2 = \V(R_j(a))\\ \E[\mu_{s,a}^{{\color{blue}imp},t,K}] &= \mu_{s,a}+  (\mu_a - \mu_{s,a})\E\left[ \frac{\hat{n}_{s,a,m}^K}{n_{s,a,o} + \hat{n}_{s,a,m}^K}\right]  && \text{where } \hat{n}_{s,a,m}^K=\frac{1}{K}\sum_{j=1}^{t}\sum_{k=1}^{K} 1(\hat{S}_j^k=s,A_{j}=a, M_j =1)\\
\end{align*}
\begin{align}
    \V(\mu_{s,a}^{{\color{blue}imp},t}) &=  \sigma_{s,a}^2\E\left[\frac{n_{s,a,o}}{(n_{s,a,o}+\hat{n}_{s,a,m}^K)^2}\right] + \sigma_a^2\E\left[\frac{\sum_{j=1}^{t}(\hat{p}_{js}^K)^2 1(A_{j}=a, M_j =1)}{(n_{s,a,o}+\hat{n}_{s,a,m}^K)^2} \right]  +  \left(\mu_{s,a}-\mu_{a}\right)^2\V\left(\frac{n_{s,a,o}}{n_{s,a,o} + \hat{n}_{s,a,m}^K}\right)
\end{align}

These results clearly indicate that the imputation based estimator is biased while the estimators that use only the fully observed cases are not. However, there can be variance benefits when $C=c$ is strongly predictive of the value of state $S$.

First, consider the extreme where there exists an $s^*$ such that $p_{s^*}=1$ as in the first scenario in Section \ref{scenarios}. Then, even if we impute using observed proportions,

 \begin{equation*}
     \hat{p}_{s^*} =  \frac{\sum_{j=1}^{t}1(S_{j}={s^*})(1-M_j)}{\sum_{j=1}^{t}(1-M_j)}
 \end{equation*}

we always have $\hat{p}_{s^*} = p_{s^*} = 1$, all imputations are correct, and multiple imputations are redundant (so let $K=1$). Since we never observe $s\neq {s^*}$, the mean reward estimators for $s^*$ are unbiased, with variances

\begin{align*}
     \V( \mu_{{s^*},a}^{obs,t})  &= \sigma_{{s^*},a}^2\E\left[\frac{1}{n_{{s^*},a,o}}\right] && n_{{s^*},a,o} = \sum_{j=1}^{t} 1(S_j = s^*, A_j=a, M_j = 0)\\
     \V( \mu_{?,a}^{{\color{blue}obs},t})  &= \sigma_{{s^*},a}^2\E\left[\frac{1}{n_{a,m}}\right] && n_{a,m} = \sum_{j=1}^{t} 1(A_j=a, M_j = 1)\\
     \V( \mu_{{s^*},a}^{imp,t})  &= \sigma_{{s^*},a}^2\E\left[\frac{1}{n_{a}}\right] && n_{a} = \sum_{j=1}^{t} 1(A_j=a) \\
\end{align*}

Clearly, $n_a \geq n_{a,m}$ and $n_a \geq n_{{s^*},a,o}$ so that

\begin{equation}
    \V( \mu_{{s^*},a}^{{\color{blue}imp},t}) \leq \V( \mu_{{s^*},a}^{{\color{blue}obs},t}) \hspace{2cm}  \V( \mu_{{s^*},a}^{{\color{blue}imp},t}) \leq \V( \mu_{?,a}^{{\color{blue}obs},t})
\end{equation}

implying the multiple imputation approach will have most efficient learning, which could make action \eqref{action-MI} appealing because it could more quickly lead to a correct ordering of actions by rewards. If the missingness rate $p_m$ is very low, then one of $n_{a}\approx n_{s^*,a,o}$ and all the action selection approaches have similar efficiency. If the missingness rate $p_m\approx 1$, $n_{a,m}\approx n_{a}$ and action \eqref{action-M} will have similar efficiency to \eqref{action-MI} because the $?$ mean estimator is used often and its variance is almost as efficient as the MI one. Actions \eqref{action-cons} and \eqref{action-M} may still do badly when occasionally $s^*$ is observed. This is the U-dynamic we describe in the main text where missing-as-state does well for low and high missingness. 

Of course, $p_{s*}=1$ is unrealistic and when this is not the case, there will be some bias. However, we suspect that a \textbf{bias-variance trade-off} with multiple imputation more advantageous when $C$ is more predictive of $S$. Using the missing-as-state approach requires $2|\mathcal{A}|$ estimators instead of $|A|$ estimators for every $c$ and hence can lead to more variable estimators compared to the multiple imputation approach -- if the data are a heap of sand, they are split into more buckets. On the other hand, the multiple imputation option introduces bias, for any states $s$ with rewards that are different from average $|\mu_{s,a}-\mu_a| \gg 0$ and even moreso if those are the same states are often imputed and/or rare ($\hat{n}_{s,a,m}^{K} > n_{s,a,o}$). That said, if a state is very rare, the bias may be less overall impactful (though rare does not per se mean unimportant!). The variance of the MI estimator again depends on the deviations from the (weighted) average reward $\mu_{a}$ as well as on the imputation probabilities but the presence of $\hat{n}_{s,a,m}^K$ in the denominators of each term will contribute to decreasing, particularly for states that are imputed often. As discussed in the Scenarios in Section \ref{scenarios}, the exact trade-off will depend on the costs of choosing a wrong action (i.e., the regret), the equality of the imputations $\hat{p}_{ts}$, and the nature of the missingness.

\subsubsection{Proofs of Means and Variances}

\begin{theorem} \label{mu-obs-bias}  $\E[\mu_{s,a}^{{\color{blue}obs},t}] = \mu_{s,a}$
\end{theorem}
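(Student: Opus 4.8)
The plan is to compute the conditional expectation of $\mu_{s,a}^{\color{blue}obs,t}$ given the data that determines the set of indices entering the estimator, namely the states $S_{1:t}$, the missingness indicators $M_{1:t}$, and the actions $A_{1:t}$. Conditional on this information, the denominator $n_{s,a,o} = \sum_{j=1}^t 1(S_j = s, A_j = a, M_j = 0)$ is a fixed number, and the numerator is a sum over the fixed index set $J = \{ j \le t : S_j = s, A_j = a, M_j = 0 \}$ of the rewards $R_j$. So the whole argument reduces to showing that $\E[R_j \mid S_{1:t}, M_{1:t}, A_{1:t}] = \mu_{s,a}$ for each $j \in J$, i.e. for each $j$ with $S_j = s$, $A_j = a$.

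First I would invoke SUTVA to write $R_j = R_j(A_j) = R_j(a)$ on the event $A_j = a$. Then the key step is to peel off the conditioning variables that are irrelevant to $R_j(a)$ given $(S_j = s, A_j = a)$. The unconfoundedness-of-actions assumption gives $A_{1:t} \indep R_j(1), \dots, R_j(|\mathcal{A}|) \mid S_j = s$, so conditioning on the action history does not change the law of $R_j(a)$ beyond conditioning on $S_j = s$. The MAR assumption gives $M_t \indep R_t(1), \dots, R_t(|\mathcal{A}|), S$, so the missingness indicators drop out. The contextual-bandit assumption ($S_t \indep S_j$, $t \ne j$) plus the fact that by Assumption 2 each $R_j$ is an independent draw from $p(r \mid C = c, S_j, A_j)$ let me discard the other states $S_{i}$, $i \ne j$. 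What remains is $\E[R_j(a) \mid S_j = s] = \E[R(a) \mid S = s] = \mu_{s,a}$ by definition. Finally, tower the conditional expectation: $\E[\mu_{s,a}^{\color{blue}obs,t}] = \E\big[ \frac{1}{n_{s,a,o}} \sum_{j \in J} \E[R_j \mid S_{1:t}, M_{1:t}, A_{1:t}] \big] = \E\big[ \frac{1}{n_{s,a,o}} \cdot n_{s,a,o} \cdot \mu_{s,a} \big] = \mu_{s,a}$, with the usual caveat that one restricts to the event $n_{s,a,o} \ge 1$ (or treats the estimator as undefined otherwise).

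The main obstacle — really more a point of care than a genuine difficulty — is handling the random denominator. Because $n_{s,a,o}$ is random and depends on the same $\omega$ as the index set, one cannot naively pull it out of the expectation; the clean way is precisely the conditioning argument above, where given $(S_{1:t}, M_{1:t}, A_{1:t})$ both the denominator and the summation range are deterministic. A secondary subtlety is making sure the assumed conditional independencies are strong enough to justify dropping \emph{all} of $S_{1:t}, M_{1:t}, A_{1:t}$ simultaneously rather than one at a time; this is where one leans on the fact that, by Assumptions 1--4, the triple $(S_{1:t}, M_{1:t}, A_{1:t})$ carries no information about $R_j(a)$ once $S_j = s$ is known — $A_{1:t}$ by unconfoundedness, $M_{1:t}$ by MAR, and the remaining states by the i.i.d.\ contextual-bandit structure and SUTVA (no interference). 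I would state this consolidated independence as the single lemma doing the work and then the computation is immediate.
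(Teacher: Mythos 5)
Your proposal is correct and follows essentially the same route as the paper: condition on the state/action/missingness histories so the index set and denominator are fixed, reduce to showing $\E[R_j(a)\mid \cdot]=\mu_{s,a}$ via SUTVA, unconfoundedness of actions, MAR, and the contextual-bandit independence, then apply the tower property. The only (immaterial) differences are that the paper conditions on $S_{1:t}^{obs}$ rather than the full $S_{1:t}$, and that you explicitly flag the $n_{s,a,o}\geq 1$ caveat, which the paper leaves implicit.
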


\begin{proof}
    \begin{align}
     \E[\mu_{x,a}^{obs,t}] &= \E[\E[\mu_{x,a}^{obs,t}|S_{1:t}^{obs},A_{1:t},M_{1:t}]] \nonumber \\
     &=  \E\left[\frac{\sum_{j=1}^{t}\E[R_{j}|S_j = x,A_j = a,M_j=0, S_{1:t}^{-j}, A_{1:t}^{-j},M_{1:t}^{-j}] 1(S_{j}=x, A_{j}=a, M_j=0)}{\sum_{j=1}^{t}1(S_{j}=x, A_{j}=a, M_j=0)}\right] \label{lin-con}\\
     &=  \E\left[\frac{\sum_{j=1}^{t}\E[R_{j}(a)|S_j = x,A_j=a, M_j=0, S_{1:t}^{-j},A_{1:t}^{-j},M_{1:t}^{-j}] 1(S_{j}=x, A_{j}=a, M_j=0)}{\sum_{j=1}^{t}1(S_{j}=x, A_{j}=a, M_j=0)}\right] \label{lin-cons}\\
     &=  \E\left[\frac{\sum_{j=1}^{t}\E[R_{j}(a)|S_j = x,A_j=a, A_{1:t}^{-j}] 1(S_{j}=x, A_{j}=a, M_j=0)}{\sum_{j=1}^{t}1(S_{j}=x, A_{j}=a, M_j=0)}\right] \label{lin-unconf1}\\
     &=  \E\left[\frac{\sum_{j=1}^{t}\E[R_j(a)|S_j=x] 1( S_{j}=x, A_{j}=a,  M_j=0)}{\sum_{j=1}^{t}1( S_{j}=x, A_{j}=a, M_j=0)}\right] \label{lin-unconf2}\\
     &=  \mu_{x,a} \E\left[\frac{\sum_{j=1}^{t} 1( S_{j}=x, A_{j}=a,  M_j=0)}{\sum_{j=1}^{t}1( S_{j}=x, A_{j}=a, M_j=0)}\right] \nonumber \\
     &= \mu_{x,a} \nonumber
\end{align}
\noindent Line \ref{lin-con} uses the fact that the indicator means we can condition on $S_j=x,A_j=a,M_j=0$ in the expectation for free. Line \ref{lin-cons} applies SUTVA. Line \ref{lin-unconf1} uses the contextual bandit independence of states to drop $S_{1:t}^{-j}$ and the unconfoundedness and independence (across time) of the missingness to drop $M_j=0$ and $M_{1:t}^{-j}$. Line \ref{lin-unconf2} uses the  unconfoundedness of actions assumption  to drop the action history and current action.

\end{proof}

\begin{theorem}
Let $n_{s,a,o} = \sum_{j=1}^{t}1(S_j = s, A_j=a, M_j=0)$ and $\sigma_{s,a}^2=\V(R(a)|S=s)$. $\V(\mu_{s,a}^{{\color{blue}obs},t}) = \sigma_{s,a}^2\E\left[\frac{1}{n_{s,a,o}}\right]$
\end{theorem}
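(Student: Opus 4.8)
The plan is to mimic the law-of-total-variance / conditioning argument used in the proof of Theorem \ref{mu-obs-bias}, but now tracking the second moment rather than just the mean. First I would condition on the ``design'' information $\mathcal{F} := (S_{1:t}^{obs}, A_{1:t}, M_{1:t})$, i.e. everything except the realized rewards. Given $\mathcal{F}$, the set $J_{s,a} := \{ j \le t : S_j = s, A_j = a, M_j = 0 \}$ is determined, and $\mu_{s,a}^{obs,t}$ is simply the empirical average of the rewards $\{R_j : j \in J_{s,a}\}$. The key structural fact, which I would establish exactly as in lines \eqref{lin-con}--\eqref{lin-unconf2} of the previous proof (SUTVA to pass to potential outcomes, the contextual-bandit independence of states across time, unconfoundedness of actions, and MAR to drop the missingness conditioning), is that conditionally on $\mathcal{F}$ the rewards $\{R_j(a) : j \in J_{s,a}\}$ are i.i.d.\ draws with mean $\mu_{s,a}$ and variance $\sigma_{s,a}^2 = \V(R(a) \mid S = s)$, and moreover are conditionally uncorrelated across distinct $j$. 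Hence $\V(\mu_{s,a}^{obs,t} \mid \mathcal{F}) = \sigma_{s,a}^2 / |J_{s,a}| = \sigma_{s,a}^2 / n_{s,a,o}$.

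Next I would apply the law of total variance:
\begin{equation*}
\V(\mu_{s,a}^{obs,t}) = \E\big[\V(\mu_{s,a}^{obs,t} \mid \mathcal{F})\big] + \V\big(\E[\mu_{s,a}^{obs,t} \mid \mathcal{F}]\big).
\end{equation*}
By the conditional-mean computation already carried out in Theorem \ref{mu-obs-bias} (the same chain of steps shows $\E[\mu_{s,a}^{obs,t} \mid \mathcal{F}] = \mu_{s,a}$ almost surely, since every term inside the sum collapses to $\mu_{s,a}$), the second term vanishes. Combining with the first step gives
\begin{equation*}
\V(\mu_{s,a}^{obs,t}) = \E\!\left[\frac{\sigma_{s,a}^2}{n_{s,a,o}}\right] = \sigma_{s,a}^2\, \E\!\left[\frac{1}{n_{s,a,o}}\right],
\end{equation*}
which is the claim. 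I would also note the implicit convention that on the event $n_{s,a,o} = 0$ the estimator is undefined, so the statement is understood conditionally on $n_{s,a,o} \ge 1$ (or one adopts the convention $1/0 = 0$); this should be flagged but does not affect the argument.

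The main obstacle is the conditional-independence bookkeeping: I need that, given $\mathcal{F}$, the summands $R_j$ for $j \in J_{s,a}$ are not merely marginally well-behaved but jointly conditionally i.i.d., so that the variance of their average is the average of their variances with no cross terms. This requires being careful that conditioning on the full vectors $A_{1:t}$ and $M_{1:t}$ (not just the coordinate-$j$ values) does not induce dependence among the rewards — which is exactly what the ``unconfounded actions'' assumption $A_{1:t} \indep R_j(1),\dots,R_j(|\mathcal{A}|) \mid S_j = s$ and the SUTVA / no-interference assumption buy us, together with the contextual-bandit assumption $S_t \indep S_j$ that makes the states at different times independent. Everything else — the two-line law of total variance and reusing the mean computation — is routine.
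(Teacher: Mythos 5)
Your proposal is correct and follows essentially the same route as the paper: condition on $(S_{1:t}^{obs},A_{1:t},M_{1:t})$, apply the law of total variance, note the conditional mean is the constant $\mu_{s,a}$ so the second term vanishes, and identify the conditional variance as $\sigma_{s,a}^2/n_{s,a,o}$. Your added care about the conditional i.i.d.\ structure of the rewards and the $n_{s,a,o}=0$ convention only makes explicit what the paper's one-line argument leaves implicit.
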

\begin{proof}
    \begin{align*}
        \V( \mu_{s,a}^{{\color{blue}obs},t}) &= \E[\V( \mu_{s,a}^{{\color{blue}obs},t}|S_{1:t}^{obs},A_{1:t},M_{1:t})] +  \V(\E[ \mu_{s,a}^{{\color{blue}obs},t}|S_{1:t}^{obs},A_{1:t},M_{1:t}]) =  \E[\frac{\sigma_{s,a}^2}{n_{s,a,o}}]  + 0 = \sigma_{s,a}^2\E\left[\frac{1}{n_{s,a,o}}\right] \\
    \end{align*}    
\end{proof}

\begin{theorem}
Let $\mu_a = \sum_{s}\mu_{s,a}p_s$ be the overall average reward (given $C=c$). $\E[\mu_{?,a}^{{\color{blue}obs},t}] = \mu_{a}$.
\end{theorem}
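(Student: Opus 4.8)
The plan is to mimic the conditioning argument used in the proof of Theorem \ref{mu-obs-bias}, but now for the ``missing'' bucket. First I would write
\[
\E[\mu_{?,a}^{obs,t}] = \E\big[\E[\mu_{?,a}^{obs,t}\mid S_{1:t}^{obs}, A_{1:t}, M_{1:t}]\big],
\]
and push the inner expectation through the ratio, so that the numerator becomes $\sum_{j=1}^t \E[R_j \mid A_j=a, M_j=1, S_{1:t}^{-j}, A_{1:t}^{-j}, M_{1:t}^{-j}]\,1(A_j=a, M_j=1)$ while the denominator is the fixed count $n_{a,m}$. The key difference from the observed case is that here $S_j$ is \emph{not} pinned down by the conditioning event, so the inner expectation is not a single $\mu_{s,a}$ but a mixture over the possible true values of $S_j$.

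Next I would apply SUTVA to replace $R_j$ by $R_j(a)$, and then invoke the unconfoundedness and independence assumptions exactly as in lines \eqref{lin-unconf1}--\eqref{lin-unconf2}: the contextual-bandit assumption lets me drop $S_{1:t}^{-j}$; the unconfounded-actions assumption lets me drop the action history and current action (conditional on $S_j$); and, crucially, Assumption on the missingness mechanism (MAR, with $M_t \indep R_t(1),\dots,R_t(|\mathcal{A}|), S_t$) lets me drop $M_j=1$ from the conditioning of the reward given $S_j$. So the $j$-th numerator term reduces to $\E[R_j(a)\mid M_j=1]\cdot 1(A_j=a, M_j=1)$. Now I use the tower rule over $S_j$: since $M_j \indep S_j$, $\E[R_j(a)\mid M_j=1] = \sum_s \E[R_j(a)\mid S_j=s]\Pr(S_j=s) = \sum_s \mu_{s,a} p_s = \mu_a$. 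Pulling the constant $\mu_a$ out of the sum, the numerator becomes $\mu_a \sum_{j=1}^t 1(A_j=a, M_j=1) = \mu_a \, n_{a,m}$, which cancels the denominator, leaving $\E[\mu_{?,a}^{obs,t}] = \mu_a$ (interpreting the ratio as $0$, or conditioning on $n_{a,m}>0$, when the count vanishes).

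The main obstacle is the step where I must argue that $\E[R_j(a)\mid A_j=a, M_j=1, \text{history}^{-j}]$ collapses to the unconditional-on-state quantity $\mu_a$ rather than to something depending on which action was actually selected under missingness. This is exactly where Assumption ``Action under missingness'' ($A_t \indep S_t \mid M_t=1$) does the work: even though the imputation $\hat S_j$ may influence $A_j$, it carries no information about the true $S_j$, so conditioning on $A_j=a$ (and on $\hat S_j$, which I would first add to the conditioning set and then drop via the unconfounded-imputations assumption) does not tilt the distribution of $S_j$ away from its marginal $(p_s)_{s\in\mathcal S}$. Once that independence is in hand, the mixture-over-states computation is routine. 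I would also be slightly careful to state the result conditionally on $n_{a,m}\ge 1$, paralleling the implicit nonzero-denominator assumption in Theorem \ref{mu-obs-bias}.
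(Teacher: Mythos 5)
Your proposal is correct and follows essentially the same route as the paper's proof: condition on the history, apply SUTVA, tower over the true $S_j$, use unconfoundedness to reduce to $\E[R_j(a)\mid S_j=s]$, and then use the independence of the unobserved state from the actions (given $M_j=1$) together with MAR to show $P(S_j=s\mid\cdot)=p_s$, so each numerator term equals $\mu_a$ and the ratio collapses. The only differences are cosmetic — you cite the ``action under missingness'' assumption where the paper invokes ``unobserved state independence'' (the same independence in substance), and you add the explicit $n_{a,m}\ge 1$ caveat, which the paper leaves implicit.
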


\begin{proof}
\begin{align*}
\E[R_j|A_j=a,A_{1:t}^{-j}, M_j=1, M_{1:t}^{-j}] &= \E[R_j(a)|A_j=a,A_{1:t}^{-j}, M_j=1, M_{1:t}^{-j}]  \\ 
&= \E[\E[R_j(a)|A_j=a,A_{1:t}^{-j}, M_j=1, M_{1:t}^{-j},S_j] \mid A_j=a,A_{1:t}^{-j}, M_j=1, M_{1:t}^{-j} ] \ \\ 
                    &= \E[\E[R_j(a)|S_{j}] \mid A_j=a,A_{1:t}^{-j}, M_j=1, M_{1:t}^{-j}] \\
                    &= \sum_{s} \E[R_j(a)|S_{j}=s]P(S_j=s|A_j=a,A_{1:t}^{-j}, M_j=1, M_{1:t}^{-j}) \\
                    &= \sum_{s} \mu_{s,a}P(S_j=s|M_j=1, M_{1:t}^{-j}) \\
                    &= \sum_{s}\mu_{s,a}p_s  \\ 
&\\ 
    \E[\E[\mu_{?,a}^{{\color{blue}obs},t}|A_{1:t}, M_{1:t}]] &= \E\left[\frac{\sum_{j=1}^{t} \mu_a 1(A_j=a, M_j = 1)}{\sum_{j=1}^{t} 1(A_j=a, M_j = 1)} \right] = \mu_a
\end{align*}

\noindent Here we use the assumption that given $M_j=1$, the true state $S_j=s$ is independent of any actions taken past, present, and future. Then we use the fact that the missingness is MAR to drop those from the conditioning.
\end{proof}

\begin{theorem}
Let $n_{a,m} = \sum_{j=1}^{t}1(A_j=a, M_j=1)$ and $\sigma_{s,a}^2=\V(R(a)|S=s)$. The variance is $ \V(\mu_{?,a}^{{\color{blue}obs},t}) = \sigma_a^2\E\left[\frac{1}{n_{a,m}}\right]$ where $\sigma_a^2 = \V(R_j(a))  = \E[\V(R_j(a)|S_j)] +\V(\E[R_j|S_j]) = \left[\sum_{s}\sigma_{s,a}^2p_{s}+ \sum_{s}\mu_{s,a}^2p_s- \left(\sum_{s}\mu_{s,a}p_{s}\right)^2 \right]$.
\end{theorem}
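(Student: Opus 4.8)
The plan is to follow the same two-level variance decomposition used for the companion variance theorems. First apply the law of total variance conditioning on the action and missingness histories,
\[
\V(\mu_{?,a}^{obs,t}) = \E\!\left[\V\!\left(\mu_{?,a}^{obs,t} \mid A_{1:t}, M_{1:t}\right)\right] + \V\!\left(\E\!\left[\mu_{?,a}^{obs,t} \mid A_{1:t}, M_{1:t}\right]\right).
\]
The argument already given in the proof that $\E[\mu_{?,a}^{obs,t}] = \mu_a$ in fact establishes the stronger pointwise statement $\E[R_j \mid A_j=a, A_{1:t}^{-j}, M_j = 1, M_{1:t}^{-j}] = \mu_a$; averaging over the indices contributing to the estimator, the inner conditional expectation equals the constant $\mu_a$ (on the event $n_{a,m} \geq 1$), so the second term vanishes and everything reduces to the conditional (``within-group'') variance.

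For that term, condition on $(A_{1:t}, M_{1:t})$, so that the estimator is the mean of the $n_{a,m}$ rewards $R_j$ with $A_j = a$, $M_j = 1$, and establish two facts. First, each such $R_j$ has conditional variance exactly $\sigma_a^2$: by SUTVA $R_j = R_j(a)$, and a further application of the law of total variance conditioning on $S_j$ — using unconfoundedness of actions and that missingness is MAR to reduce $\V(R_j(a)\mid S_j, A_{1:t}, M_{1:t})$ to $\sigma_{S_j,a}^2$ and $\E[R_j(a)\mid S_j, A_{1:t}, M_{1:t}]$ to $\mu_{S_j,a}$, together with $P(S_j = s \mid A_{1:t}, M_{1:t}) = p_s$ on $\{M_j = 1\}$ (shown inside the mean-theorem proof) — yields
\[
\V\!\left(R_j(a) \mid A_{1:t}, M_{1:t}\right) = \sum_s \sigma_{s,a}^2 p_s + \Big(\sum_s \mu_{s,a}^2 p_s - \big(\textstyle\sum_s \mu_{s,a} p_s\big)^2\Big) = \sigma_a^2,
\]
which is precisely the decomposition $\sigma_a^2 = \E[\V(R_j(a)\mid S_j)] + \V(\E[R_j\mid S_j])$ claimed in the statement. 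Second, these rewards are conditionally uncorrelated given $(A_{1:t}, M_{1:t})$: rewards are independent draws given $(S_{1:t}, A_{1:t})$, and the states $S_j$ are i.i.d. and — because actions do not influence states in a contextual bandit and missingness is MAR — remain conditionally independent given $(A_{1:t}, M_{1:t})$, so the covariance between distinct summed rewards is zero.

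Combining the two facts, conditionally on $(A_{1:t}, M_{1:t})$ the estimator is an average of $n_{a,m}$ conditionally uncorrelated terms each with conditional variance $\sigma_a^2$, so $\V(\mu_{?,a}^{obs,t}\mid A_{1:t}, M_{1:t}) = \sigma_a^2/n_{a,m}$; taking expectations and adding the vanishing second term gives $\V(\mu_{?,a}^{obs,t}) = \sigma_a^2\,\E[1/n_{a,m}]$. I expect the main obstacle to be the second fact: one must verify that conditioning only on $(A_{1:t}, M_{1:t})$ — rather than on the full state sequence — neither introduces correlation among the summed rewards nor distorts each summand's conditional law away from the $S$-marginal reward distribution, and this leans essentially on the MAR and ``unobserved state independence'' assumptions rather than on any property of the action-selection rule. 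As in the companion theorems, the event $n_{a,m} = 0$, on which the estimator is undefined, is implicitly excluded.
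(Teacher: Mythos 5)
Your proposal is correct and follows essentially the same route as the paper's proof: an outer law of total variance conditioning on $(A_{1:t},M_{1:t})$ whose "between" term vanishes because the conditional mean is the constant $\mu_a$, conditional independence of the rewards given the history to split the variance of the sum, and an inner law of total variance conditioning on $S_j$ (reduced via unconfoundedness and MAR) to show each summand has conditional variance $\sigma_a^2$. The only difference is presentational: you explicitly flag and justify the conditional uncorrelatedness of the summed rewards, which the paper asserts in one line.
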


\noindent Intuitively, for a given $a$, its variance is larger if the variances within $s$ groups are larger on average and if the means of those groups vary more (across-variance), and its variance shrinks as the (expected) number of observations where the state is missing and we take action $a$ increases.

\begin{proof}
Leveraging the same kinds of arguments as in the mean case and using independence of the $R_j(a)$ across time given the history to take the variance within the sum:

\begin{align*}
    \V(\mu_{?,a}^{imp,t}) &= \V(\E(\mu_{?,a}^{imp,t}|A_{1:t},M_{1:t})) + \E(\V(\mu_{?,a}^{imp,t}|A_{1:t},M_{1:t})) \\
    &= 0 + \E[\frac{1}{n_{a,m}^2} \sum_{j=1}^{t}\V(R_j(a)|A_j=a,M_j=1,A_{1:t}^{-j},M_{1:t}^{-j})1(A_j=a,M_j=1)]\\
    &\\ 
    \V(R_j(a)|A_j=a,M_j=1,A_{1:t}^{-j},M_{1:t}^{-j})
    &= \E[\V(R_j(a)|S_j,A_j=a,M_j=1,A_{1:t}^{-j},M_{1:t}^{-j})|A_j=a,M_j=1,A_{1:t}^{-j},M_{1:t}^{-j}] \\
    &\hspace{1cm}+ \V[\E(R_j(a)|S_j,A_j=a,M_j=1,A_{1:t}^{-j},M_{1:t}^{-j})|A_j=a,M_j=1,A_{1:t}^{-j},M_{1:t}^{-j}] \\
    &= \E[\V(R_j(a)|S_j)|A_j=a,M_j=1,A_{1:t}^{-j},M_{1:t}^{-j}] \\
    &\hspace{1cm}+ \V[\E(R_j(a)|S_j)|A_j=a,M_j=1,A_{1:t}^{-j},M_{1:t}^{-j}] \\
     &= \E[\V(R_j(a)|S_j)|M_j=1] + \V[\E(R_j(a)|S_j)|M_j=1] \\
     &= \E[\V(R_j(a)|S_j)] + \V[\E(R_j(a)|S_j)] \\
     &= \V(R_j(a)) \\
    \\
    \\
    \V(\mu_{?,a}^{imp,t})&=  \E\left[\frac{1}{n_{a,m}^2} \sum_{j=1}^{t}\V(R_j(a))1(A_j=a,M_j=1)\right]\\
    &=  \E\left[\frac{1}{n_{a,m}^2} \sum_{j=1}^{t}[\E[\V(R_j(a)|S_j)]+\V(\E[R_j(a)|S_j])]1(A_j=a,M_j=1)\right]\\
    &=  \E\left[\frac{1}{n_{a,m}^2} \sum_{j=1}^{t}[\sum_{s}\sigma_{s,a}^2p_{s}+\V(\mu_{S_j,a})]1(A_j=a,M_j=1)\right]\\
    &=  \E\left[\frac{1}{n_{a,m}} \left[\sum_{s}\sigma_{s,a}^2p_{s}+ \sum_{s}\mu_{s,a}^2p_s - \left(\sum_{s}\mu_{s,a}p_{s}\right)^2 \right]\right]\\
    & \E\left[\frac{1}{n_{a,m}}\right] * \left[\sum_{s}\sigma_{s,a}^2p_{s}+ \sum_{s}\mu_{s,a}^2p_s- \left(\sum_{s}\mu_{s,a}p_{s}\right)^2 \right] 
\end{align*}    
\end{proof}

\textbf{Deriving Results for $\mu_{s,a}^{{\color{blue}imp},t}$ under MAR}

Recall that $\hat{S}_j$ are drawn  at time $j$ with probabilities $\tilde{p}_{1j},...,\tilde{p}_{|S|j}$. These probabilities could be set to $p_1,...,p_{|S|}$ (the true probabilities) or they could be estimates which may be updated after each time step.\footnote{I believe nothing in the proof breaks in the latter case. I found this a little surprising at first but I think it is all about the unconfoundedness, the fact that the imputation has no selection effect on which $R_i's$ you observe given you take action $a$. For the rest, having imputation probabilities depend on the data just makes calculating expected values of the $\hat{n}_{s,a,m}^K$ parts really complicated.}
 In either case, we assume they are drawn at random in the unconfounded way described above.

\begin{theorem}\label{mu-imp-bias}
    Let  $n_{s,a,o} = \sum_{j=1}^{t}1(S_{j}=s,A_{j}=a, M_j=0)$ and $\hat{n}_{s,a,m}^K=\frac{1}{K}\sum_{j=1}^{t}\sum_{k=1}^{K} 1(\hat{S}_j^k=s,A_{j}=a, M_j =1)$, leaving a $t$ index implicit. Also, let $\mu_a = \sum_{s}\mu_{s,a}p_s$. Under the assumptions listed above,  
    
\begin{equation}
     \E[\mu_{s,a}^{{\color{blue}imp},t,K}] =  \mu_{s,a} \E\left[ \frac{n_{s,a,o}}{n_{s,a,o} + \hat{n}_{s,a,m}^K}\right] +  \mu_{a} \E\left[\frac{\hat{n}_{s,a,m}^K}{n_{s,a,o} + \hat{n}_{s,a,m}^K}\right] 
\end{equation}

 \noindent equivalently,
 \begin{equation*}
     \E[\mu_{s,a}^{{\color{blue}imp},t,K}] = \mu_{s,a}\left[\E\left[ \frac{n_{s,a,o} + p_s \hat{n}_{s,a,m}^K}{n_{s,a,o} + \hat{n}_{s,a,m}^K}\right] \right] +  \left( \sum_{s'\neq s}\mu_{s',a}p_{s'}\right) \E\left[\frac{\hat{n}_{s,a,m}^K}{n_{s,a,o} + \hat{n}_{s,a,m}^K}\right]
 \end{equation*}

 \noindent also equivalently, using the fact that for any random variable $Y\in [0,1]$, $\E[cY+d(1-Y)]=(c-d)\E[Y]+d$,

\begin{equation}\label{eq-biasform}
\E[\mu_{s,a}^{{\color{blue}imp},t,K}] = \mu_{s,a}+  (\mu_a - \mu_{s,a})\E\left[ \frac{\hat{n}_{s,a,m}^K}{n_{s,a,o} + \hat{n}_{s,a,m}^K}\right] 
\end{equation}

\noindent which directly gives the expression for the bias $\E[\mu_{s,a}^{{\color{blue}imp},t,K}]-\mu_{s,a}$.
\end{theorem}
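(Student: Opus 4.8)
The plan is to compute the conditional expectation of $\mu_{s,a}^{\text{imp},t,K}$ given the observable history $\mathcal{H}_t := \sigma\big(S_{1:t}^{obs}, A_{1:t}, M_{1:t}, \hat{S}_{1:t}^{1:K}\big)$ and then take the outer expectation. The key observation is that $n_{s,a,o}$ and $\hat{n}_{s,a,m}^K$ — hence the entire denominator of $\mu_{s,a}^{\text{imp},t,K}$, together with every indicator appearing in the numerator — are $\mathcal{H}_t$-measurable. So conditionally on $\mathcal{H}_t$ the numerator is a sum of terms $R_j\cdot(\text{known indicator})$, and it suffices to evaluate $\E[R_j\mid\mathcal{H}_t]$ on the two relevant events.

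First, on the event $\{S_j=s,A_j=a,M_j=0\}$ — which lies in $\mathcal{H}_t$ since $S_j$ is observed there — I would repeat essentially the argument from the proof of Theorem~\ref{mu-obs-bias}: apply SUTVA to replace $R_j$ by $R_j(a)$, then use the contextual-bandit independence of states across time, MAR missingness (so $M_{1:t}$ drops out of the conditioning), and unconfoundedness of actions (so $A_{1:t}$ drops out), to get $\E[R_j\mid\mathcal{H}_t]=\E[R_j(a)\mid S_j=s]=\mu_{s,a}$ on that event. Summing over $j$, the observed part of the numerator has conditional expectation $\mu_{s,a}\,n_{s,a,o}$.

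Second — the only genuinely new piece — on the event $\{\hat{S}_j^k=s,A_j=a,M_j=1\}$ the true state $S_j$ is not in $\mathcal{H}_t$, so I would condition additionally on $S_j=s'$. On $\{S_j=s'\}$, SUTVA plus the unconfounded-imputations and unconfounded-actions assumptions (the imputation $\hat{S}$, the action, and $M_j$ affect potential outcomes only through the true state) give $\E[R_j\mid\mathcal{H}_t,S_j=s']=\E[R_j(a)\mid S_j=s']=\mu_{s',a}$. It remains to identify $\Pr(S_j=s'\mid\mathcal{H}_t)$ on $\{M_j=1\}$: by MAR ($S_j\indep M_j$) together with contextual-bandit independence of states and the unobserved-state-independence assumption, $S_j$ is independent of the rest of $\mathcal{H}_t$ on $\{M_j=1\}$, so this probability is the marginal $p_{s'}$. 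Averaging, $\E[R_j\mid\mathcal{H}_t]=\sum_{s'}\mu_{s',a}p_{s'}=\mu_a$ on that event, and the imputed part $\frac{1}{K}\sum_{k=1}^K\sum_{j=1}^t R_j 1(\hat{S}_j^k=s,A_j=a,M_j=1)$ has conditional expectation $\mu_a\,\hat{n}_{s,a,m}^K$.

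Combining, $\E[\mu_{s,a}^{\text{imp},t,K}\mid\mathcal{H}_t]=\big(\mu_{s,a}\,n_{s,a,o}+\mu_a\,\hat{n}_{s,a,m}^K\big)\big/\big(n_{s,a,o}+\hat{n}_{s,a,m}^K\big)$; the outer expectation gives the first displayed identity. The two equivalent forms are then pure algebra: splitting $\mu_a=\mu_{s,a}p_s+\sum_{s'\neq s}\mu_{s',a}p_{s'}$ yields the middle form, and applying the stated identity $\E[cY+d(1-Y)]=(c-d)\E[Y]+d$ with $Y=\hat{n}_{s,a,m}^K/(n_{s,a,o}+\hat{n}_{s,a,m}^K)\in[0,1]$, $c=\mu_a$, $d=\mu_{s,a}$ yields \eqref{eq-biasform} and hence the bias. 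I expect the main obstacle to be the bookkeeping in the missing-state case — making sure every conditioning variable that is dropped is justified by one of the listed independence assumptions, and in particular observing that the argument is unaffected by the imputation probabilities $\hat{p}_{js}$ being data-dependent, since only the unconfoundedness of $\hat{S}$ is used.
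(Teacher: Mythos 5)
Your proposal is correct and follows essentially the same route as the paper's proof: condition on the history $H_t=(S_{1:t}^{obs},A_{1:t},M_{1:t},\hat{S}_{1:t}^{K})$, split the numerator into the observed part (yielding $\mu_{s,a}\,n_{s,a,o}$ via the Theorem~\ref{mu-obs-bias} argument) and the imputed part (yielding $\mu_a\,\hat{n}_{s,a,m}^K$ via an extra tower-property step over $S_j$ and the independence of the unobserved state from actions and imputations given $M_j=1$), then take the outer expectation and do the algebra. Your closing remark that the argument is insensitive to data-dependent imputation probabilities also matches the paper's own footnoted observation.
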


\noindent \textbf{Interpretation}:  The first form shows that $\E[\mu_{s,a}^{{\color{blue}imp},t,K}]$ is a convex combination between $\mu_{s,a}$ and the overall mean $\mu_{a}$, which will in general be biased. However, if there exists an $s^*$ such that $p_{s^*}\approx 1$, then, as the second form shows, we get roughly $\mu_{s^*,a}$ -- the bias may not be bad for the highly likely state, though it could be bad for any rare states. In the extreme, if for some state $s^*$, $p_{s^*}=1$, we get unbiasedness for that state and means for any other $s\neq s^*$ are not relevant.

\begin{align*}
    \E[\mu_{s^*,a}^{imp,t,K}]  &= \mu_{s^*,a}\E\left[ \frac{n_{s^*,a,o} +  \hat{n}_{s^*,a,m}^K}{n_{s^*,a,o} + \hat{n}_{s^*,a,m}^K}\right]  = \mu_{s^*,a}
\end{align*}

\noindent The third form (Equation \ref{eq-biasform}) highlights that the bias also depends on how different the overall mean is from the $s$ specific means $(\mu_{a}-\mu_{s,a})$ and on the missingness rate. If the missingness rate is lower, then the expectation involving $\hat{n}_{s,a,m}^K$ in the numerator is closer to $0$.

\begin{proof}
    \noindent Let $H_t =  (S_{1:t}^{obs}, A_{1:t}, M_{1:t}, \hat{S}_{1:t}^K)$.
    
\begin{align}
    &\E[\mu_{s,a}^{{\color{blue}imp},t}] \nonumber \\
    &=\E[\E[\mu_{s,a}^{{\color{blue}imp},t}|H_t]] \nonumber \\ 
    &= \E\left[ \frac{\sum_{j=1}^{t}\E[R_j|H_t]1(S_{j}=s,A_{j}=a, M_j =0) + \frac{1}{K}\sum_{k=1}^{K}\E[R_j|H_t]1(\hat{S}_j^k=s,A_{j}=a, M_j =1)}{\sum_{j=1}^{t}1(S_{j}=s,A_{j}=a, M_j=0) + \frac{1}{K}\sum_{k=1}^{K}1(\hat{S}_j^k=s,A_{j}=a, M_j =1)}\right] \nonumber \\ 
    &= \E\left[ \frac{\sum_{j=1}^{t}\E[R_j(a)|A_t = a, S_t =s, M_j = 0, H_t^{-j}]1(S_{j}=s,A_{j}=a, M_j =0)}{n_{s,a,o} + \hat{n}_{s,a,m}^K}\right]  \nonumber \\ 
    &\hspace{1cm}+ \E\left[\frac{\frac{1}{K}\sum_{j=1}^{t}\sum_{k=1}^{K}\E[R_j(a)|A_t=a,\hat{S}_t^k=s, M_j =1 , H_t^{-j}]1(\hat{S}_j^k=s,A_{j}=a, M_j =1)}{n_{s,a,o} + \hat{n}^K_{s,a,m}}\right] \nonumber \\
     &= (\text{Part I}) + (\text{Part II}) \nonumber  
\end{align}
For Part I, the logic is essentially the same as in the proof of Theorem \ref{mu-obs-bias}, now with the addition of dropping $\hat{S}_{1:t}^K$ because of its unconfoundedness given $S_t$.

\begin{align}
   \text{Part I} &= \E\left[ \frac{\sum_{j=1}^{t}\E[R_j(a)|S_t=s]1(S_{j}=s,A_{j}=a, M_j =0)}{n_{s,a,o} + \hat{n}_{s,a,m}^K}\right] \nonumber \\ 
   &= \E\left[ \frac{\sum_{j=1}^{t}\mu_{s,a}1(S_{j}=s,A_{j}=a, M_j =0)}{n_{s,a,o} + \hat{n}_{s,a,m}^K}\right] \nonumber \\
   &= \mu_{s,a} \E\left[ \frac{n_{s,a,o}}{n_{s,a,o} + \hat{n}_{s,a,m}^K}\right] \\
\end{align}

\noindent For Part II, we have to do an additional Law of Total Expectation Step to condition on $S$ before it is possible to apply unconfoundedness.

\begin{align}
  \E[R_j(a)|A_j=a, \hat{S}_j^k = s, M_j =1 , H_{t}^{-j}] 
  &= \E[\E[R_j(a)|S_j, A_j=a, \hat{S}_j^k = s, M_j =1 , H_{t}^{-j}]|A_j=a, \hat{S}_j^k = s, M_j =1 , H_{t}^{-j}] \nonumber \\
&= \E[\E[R_j(a)|S_j]|A_j=a, \hat{S}_j^k = s, M_j =1 , H_{t}^{-j}]  \nonumber \\
&= \sum_{s}\mu_{s,a}P(S_j=s|A_j=a,\hat{S}_j^k=s,M_j=1,H_t^{-j}) \label{lin-history}\\ 
&= \sum_{s}\mu_{s,a}P(S_j=s|M_j=1) \label{lin-drophistory}\\
&=\sum_{s}\mu_{s,a}P(S_j=s) \label{lin-dropmiss}\\
&=\sum_{s}\mu_{s,a}p_s \nonumber\\
&=\mu_a  \nonumber
\end{align}

\noindent The move from line \ref{lin-history} to line \ref{lin-drophistory} is justified by the fact that if $M_j=1$, then $S_j=s$ has no relationship to the actions taken, past or present, or to the imputation, because it is unobserved. That is, without the $M_j=1$, we could not do this simplification because in general future and present actions can reflect the state $S_t$ and we'd have a more complicated probability on our hands in line \ref{lin-history}. In the next line \ref{lin-dropmiss}, we use the independence between the missingness mechanism and state to drop $M_j=1$ (note: conditioning on a fully observed part of the state $S_1=s_1$ is implicit throughout). Applying the above calculation to the full Part II expression, we have

\begin{align}
    \text{Part II} &=  \E\left[\frac{\frac{1}{K}\sum_{j=1}^{t}\sum_{k=1}^{K}\mu_a 1(\hat{S}_j=s,A_{j}=a, M_j =1)}{n_{s,a,o} + \hat{n}_{s,a,m}^K}\right] =  \mu_{a} \E\left[\frac{\hat{n}_{s,a,m}^K}{n_{s,a,o} + \hat{n}_{s,a,m}^K}\right] \nonumber
\end{align}

\noindent Putting everything together gives:

\begin{align}
     \E[\mu_{s,a}^{{\color{blue}imp},t,K}]     &= \mu_{s,a} \E\left[ \frac{n_{s,a,o}}{n_{s,a,o} + \hat{n}_{s,a,m}^K}\right] +  \left( \sum_{s'}\mu_{s',a}p_{s'}\right) \E\left[\frac{\hat{n}_{s,a,m}^K}{n_{s,a,o} + \hat{n}_{s,a,m}^K}\right] \nonumber \\
    &= \mu_{s,a}\E\left[ \frac{n_{s,a,o} + p_s \hat{n}_{s,a,m}^K}{n_{s,a,o} + \hat{n}_{s,a,m}^K}\right]  +  \left( \sum_{s'\neq s}\mu_{s',a}p_{s'}\right) \E\left[\frac{\hat{n}_{s,a,m}^K}{n_{s,a,o} + \hat{n}_{s,a,m}^K}\right] \label{lin-result-form2}
\end{align}
\end{proof}

\begin{theorem}
Under the assumptions listed above,   and letting $\hat{p}_{js}^K= \frac{1}{K}\sum_{k=1}^{K}1(\hat{S}_j=s)$,

    \begin{align*}
    \V(\mu_{s,a}^{{\color{blue}imp},t}) &=  \sigma_{s,a}^2\E\left[\frac{n_{s,a,o}}{(n_{s,a,o}+\hat{n}_{s,a,m}^K)^2}\right] + \sigma_a^2\E\left[\frac{\sum_{j=1}^{t}(\hat{p}_{js}^K)^2 1(A_{j}=a, M_j =1)}{(n_{s,a,o}+\hat{n}_{s,a,m}^K)^2} \right]  +  \left(\mu_{s,a}-\mu_{a}\right)^2\V\left(\frac{n_{s,a,o}}{n_{s,a,o} + \hat{n}_{s,a,m}^K}\right)\\
\end{align*}

\noindent where $\sigma_a^2 = \V(R_j(a))  = \E[\V(R_j(a)|S_j)] +\V(\E[R_j|S_j]) = \left[\sum_{s}\sigma_{s,a}^2p_{s}+ \sum_{s}\mu_{s,a}^2p_s- \left(\sum_{s}\mu_{s,a}p_{s}\right)^2 \right]$. 
\end{theorem}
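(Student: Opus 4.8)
The plan is to apply the law of total variance conditioning on the history $H_t = (S_{1:t}^{obs}, A_{1:t}, M_{1:t}, \hat{S}_{1:t}^K)$ used in the proof of Theorem \ref{mu-imp-bias}. The key preliminary observation is that, since the indicator blocks $1(S_j=s,A_j=a,M_j=0)$ and $1(A_j=a,M_j=1)$ are mutually exclusive in $M_j$, we can write $\mu_{s,a}^{{\color{blue}imp},t}$ as an $H_t$-measurable weighted average $\frac{\sum_{j=1}^t w_j R_j}{\sum_{j=1}^t w_j}$ with weights $w_j = 1(S_j=s,A_j=a,M_j=0) + \hat{p}_{js}^K\,1(A_j=a,M_j=1)$, and $\sum_{j=1}^t w_j = n_{s,a,o} + \hat{n}_{s,a,m}^K$. (Here I use that $\frac1K\sum_k R_j 1(\hat S_j^k=s,A_j=a,M_j=1) = R_j\,\hat p_{js}^K 1(A_j=a,M_j=1)$ because $R_j$ does not depend on $k$.) So $\V(\mu_{s,a}^{{\color{blue}imp},t}) = \E[\V(\mu_{s,a}^{{\color{blue}imp},t}\mid H_t)] + \V(\E[\mu_{s,a}^{{\color{blue}imp},t}\mid H_t])$, and I handle the two pieces separately.

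For the between-history term: the proof of Theorem \ref{mu-imp-bias} already shows $\E[R_j\mid H_t] = \mu_{s,a}$ whenever the observed block fires and $\E[R_j\mid H_t] = \mu_a$ whenever an imputed block fires, so $\E[\mu_{s,a}^{{\color{blue}imp},t}\mid H_t] = \mu_{s,a}Y + \mu_a(1-Y)$ where $Y = n_{s,a,o}/(n_{s,a,o}+\hat{n}_{s,a,m}^K)\in[0,1]$. Taking the variance of this affine function of $Y$ gives $(\mu_{s,a}-\mu_a)^2\,\V(Y)$, which is precisely the third summand in the claim.

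For the within-history term: conditional on $H_t$, the rewards $R_j = R_j(A_j)$ are independent across $j$ by the model assumptions (contextual bandit, SUTVA, unconfoundedness — the same structure used implicitly in the earlier $\V(\mu_{?,a}^{obs,t})$ proof), so $\V(\mu_{s,a}^{{\color{blue}imp},t}\mid H_t) = (n_{s,a,o}+\hat{n}_{s,a,m}^K)^{-2}\sum_{j=1}^t w_j^2\,\V(R_j\mid H_t)$. I then evaluate $\V(R_j\mid H_t)$ on the two events where $w_j\neq 0$. When $M_j=0$, $S_j=s$, $A_j=a$: unconfounded actions, unconfounded imputations, and SUTVA give $\V(R_j\mid H_t) = \V(R_j(a)\mid S_j=s) = \sigma_{s,a}^2$, with $w_j=1$, so these terms sum to $n_{s,a,o}\,\sigma_{s,a}^2$. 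When $M_j=1$, $A_j=a$: the unobserved-state-independence and unconfounded-imputation assumptions plus MAR collapse all of $H_t$ in the conditioning to nothing informative about $R_j(a)$, giving $\V(R_j\mid H_t) = \V(R_j(a)) = \sigma_a^2$, with $w_j^2 = (\hat p_{js}^K)^2$, so these terms sum to $\sigma_a^2\sum_{j=1}^t (\hat p_{js}^K)^2 1(A_j=a,M_j=1)$. Dividing by $(n_{s,a,o}+\hat{n}_{s,a,m}^K)^2$ and taking $\E[\cdot]$ over $H_t$ yields exactly the first two summands; adding the between-history term completes the proof, and the stated decomposition $\sigma_a^2 = \sum_s\sigma_{s,a}^2 p_s + \sum_s\mu_{s,a}^2 p_s - (\sum_s\mu_{s,a}p_s)^2$ follows from the conditional variance formula on $\V(R_j(a))$.

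The main obstacle is the bookkeeping in the imputed case: one must justify that conditioning on the entire history $H_t$ — which crucially contains $M_j=1$ and the imputations $\hat S_{1:t}^K$ — reduces $\V(R_j\mid H_t)$ to the unconditional $\V(R_j(a))$, i.e.\ that once $M_j=1$ neither the observed history nor the imputations carry any information about $R_j(a)$. This is where the unobserved-state-independence, unconfounded-imputation, and MAR assumptions are all genuinely used, and the argument should mirror (and cite) the corresponding step in the proof of Theorem \ref{mu-imp-bias}. A minor issue is the $0/0$ convention when $n_{s,a,o}+\hat{n}_{s,a,m}^K = 0$, which I would dispatch by implicitly conditioning on that sum being positive, consistent with the rest of this appendix.
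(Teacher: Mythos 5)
Your proposal is correct and follows essentially the same route as the paper's proof: the law of total variance conditioned on $H_t$, the same splitting of the conditional variance into an observed-block part (giving $\sigma_{s,a}^2 n_{s,a,o}$) and an imputed-block part (giving $\sigma_a^2\sum_j(\hat p_{js}^K)^2 1(A_j=a,M_j=1)$) via a nested total-variance argument collapsing $\V(R_j(a)\mid H_t)$ to $\sigma_a^2$ when $M_j=1$, and the same affine-in-$Y$ treatment of the between-history term yielding $(\mu_{s,a}-\mu_a)^2\V\bigl(n_{s,a,o}/(n_{s,a,o}+\hat n_{s,a,m}^K)\bigr)$. The weighted-average bookkeeping with $w_j$ is a slightly cleaner packaging of the same mutual-exclusivity observation the paper uses, but the argument is identical in substance.
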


\begin{proof}
\noindent  Let $H_t =  (S_{1:t}^{obs}, A_{1:t}, M_{1:t}, \hat{S}_{1:t}^K)$. We start with calculating $\V(\mu_{s,a}^{{\color{blue}imp},t,K}|H_t)$. First, note that conditional on the history, $R_j$ and $ R_i$ for $i\neq j$ are independent so we can bring the variance inside the sum below. We can also separate into a sum of variances within the sum over $j$ because $1(S_j,A_j=a,M_j=0)1(\hat{S}_j^k=s,A_j=a,M_j=1)=0$ always since both cannot be true at once. 

\begin{align*}
  \V(\mu_{s,a}^{{\color{blue}imp},t,K}|H_t) &=  \frac{\sum_{j=1}^{t}\V(R_j|H_t]1(S_{j}=s,A_{j}=a, M_j =0) + \sum_{j=1}^{t}\V(R_j|H_t)  \left(\frac{1}{K}\sum_{k=1}^{K}1(\hat{S}_{j}^k=s)\right)^21(A_{j}=a, M_j =1)}{(\sum_{j=1}^{t}1(S_{j}=s,A_{j}=a, M_j=0) + \frac{1}{K}\sum_{k=1}^{K}1(\hat{S}_j^k=s,A_{j}=a, M_j =1))^2} \nonumber \\ 
\end{align*}

\noindent Again splitting this into two parts, we have that for the left part, the same unconfoundedness logic used in the proof of Theorem \ref{mu-imp-bias} applies:

\begin{align*}
   \text{Part I}
   &= \frac{\sum_{j=1}^{t}\V(R_j(a)|S_t=s)1(S_j=s,A_j=a,M_j=0) }{(n_{s,a,o}+\hat{n}_{s,a,m}^K)^2} = \frac{\sigma_{s,a}^2n_{s,a,o} }{(n_{s,a,o}+\hat{n}_{s,a,m}^K)^2}  \\ 
\end{align*}

\noindent For the right part, we apply the Law of Total Variance and use a similar logic to in Theorem  \ref{mu-imp-bias} to reason:

\begin{align*}
    \V(R_j(a)|A_j=a,M_j=1,H_t) &= \E[\V(R_j(a)|S_t, A_j=a,M_j=1,H_t)|A_j=a,M_j=1,H_t] \\
    &\hspace{1cm}+ \V[\E(R_j(a)|S_t, A_j=a,M_j=1,H_t)|A_j=a,M_j=1,H_t]\\ 
     &= \E[\V(R_j(a)|S_t)|A_j=a,M_j=1,H_t] + \V[\E(R_j(a)|S_t)|A_j=a,M_j=1,H_t]\\ 
     &= \sum_{s}\sigma^2_{s,a}P(S_t=s|A_j=a,M_j=1,H_t) + \V[\mu_{S_{t},a}|A_j=a,M_j=1,H_t]\\ 
     &= \sum_{s}\sigma^2_{s,a}p_s + \sum_{s}\mu_{s,a}^2p_s -\left(\sum_{s}\mu_{s,a}p_{s}\right)^2 \\ 
     &= \sigma_a^2
\end{align*}

Bringing this into the full part II term:

\begin{align*}
   \text{Part II}
   &= \frac{\sum_{j=1}^{t} \V(R_j(a)|A_j=a,M_j=1,H_t)\left(\frac{1}{K}\sum_{k=1}^{K}1(\hat{S}_{j}^k=s)\right)^21(A_{j}=a, M_j =1) }{(n_{s,a,o}+\hat{n}_{s,a,m}^K)^2} \\
   &= \frac{\sum_{j=1}^{t}\sigma_a^2 
   (\hat{p}_{js}^K)^2 1(A_{j}=a, M_j =1) }{(n_{s,a,o}+\hat{n}_{s,a,m}^K)^2} \\
   &= \sigma_a^2\frac{\sum_{j=1}^{t}(\hat{p}_{js}^K)^2 1(A_{j}=a, M_j =1)}{(n_{s,a,o}+\hat{n}_{s,a,m}^K)^2}  \\ 
\end{align*}

\noindent where note that if $K=1$, this term is 

\[ \sigma_a^2\frac{\hat{n}_{s,a,m}^K}{(n_{s,a,o}+\hat{n}_{s,a,m}^K)^2}  \]

\noindent Overall, Parts I and II give:

\begin{align*}
     \E[\V(\mu_{s,a}^{{\color{blue}imp},t}|A_{1:t},M_{1:t},S_{1:t}^{obs},\hat{S}_{1:t})] &= \sigma_{s,a}^2\E\left[\frac{n_{s,a,o}}{(n_{s,a,o}+\hat{n}_{s,a,m}^K)^2}\right] + \sigma_a^2\E\left[\frac{\sum_{j=1}^{t}(\hat{p}_{js}^K)^2 1(A_{j}=a, M_j =1)}{(n_{s,a,o}+\hat{n}_{s,a,m}^K)^2} \right]
\end{align*}

\noindent The full variance is then obtained by applying the Law of Total Variance and plugging in calculations from the proof of Theorem \ref{mu-imp-bias}.

\begin{align*}
    \V(\mu_{s,a}^{{\color{blue}imp},t}) &= \E[\V(\mu_{s,a}^{{\color{blue}imp},t}|A_{1:t},M_{1:t},S_{1:t}^{obs},\hat{S}_{1:t})] +  \V[\E(\mu_{s,a}^{{\color{blue}imp},t}|A_{1:t},M_{1:t},S_{1:t}^{obs},\hat{S}_{1:t})] \\ 
\end{align*}

\noindent For the second term, Note the general result that if $Y\in [0,1]$, then $\V(cY+d(1-Y)) = \V((c-d)Y)=(c-d)^2\V(Y)$. Hence, we have

\begin{align*}
    \V\left[ \mu_{s,a}\frac{n_{s,a,o}}{n_{s,a,o} + \hat{n}_{s,a,m}^K}+ \mu_{a}\frac{\hat{n}_{s,a,m}^K}{n_{s,a,o} + \hat{n}_{s,a,m}^K}\right] &= \left(\mu_{s,a}-\mu_{a}\right)^2\V\left(\frac{n_{s,a,o}}{n_{s,a,o} + \hat{n}_{s,a,m}^K}\right)
\end{align*}

\end{proof}

\newpage 
\subsection{Justifying the Fractional $Q$-Learning Update}\label{app-frac-Qlearn}

We might wonder whether, given a fully observed sequence of $(S_t,A_t,R_t,S_{t+1})$, the sequential fractional Q-learning update recovers the usual Q-learning update if there was no missingness. The answer is that it does not exactly but comes very close.

\begin{theorem}
Suppose $S_t=s$ and $S_{t+1}=s'$ are fully observed, with action $A_t=a$ and reward $R_t=r$. If $s\neq s'$, then 

\begin{equation*}
    Q^{(k)}_{t+1}(s,a) =  Q_t^{(K)}(s,a)+\left(kr + \sum_{j=2}^{k} (-1)^{j-1}c_{j,k} r^j \right)TD_{\gamma}  \ \ \ \ \text{for} \ r=\frac{\alpha}{K}
\end{equation*}

where if $k=1$, the sum is defined as $0$,  where $TD_\gamma=R+\gamma \max_{a'}Q_t^{(K)}(s',a') - Q_t^{(K)}(s,a)$, and where $c_{k,k}=1$, $c_{2,k}=3$ for all $k$ and $c_{j,k+1}=c_{j,k} + c_{j-1,k}$ for $k=2,...,K$.
\end{theorem}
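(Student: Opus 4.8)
The plan is to exploit the hypothesis $s\neq s'$, which collapses the inner loop over $k$ to a one–dimensional affine recursion. Each fractional update modifies only the table entry $Q(S_t^k,A_t)=Q(s,a)$, and since $s'\neq s$ the quantity $\max_{a'}Q_{t+1}^{(k)}(s',a')$ never changes during the loop; it equals $V':=\max_{a'}Q_t^{(K)}(s',a')$ for every $k$. Writing $q_k:=Q_{t+1}^{(k)}(s,a)$ with $q_0=Q_t^{(K)}(s,a)$ and $r:=\alpha/K$, the update $Q_{t+1}^{(k)}(s,a)=Q_{t+1}^{(k-1)}(s,a)+r\,TD_\gamma(s,a,R,s')$ — in which $TD_\gamma$ is evaluated at $Q_{t+1}^{(k-1)}$, hence equals $R+\gamma V'-q_{k-1}$ — becomes the scalar recursion $q_k=(1-r)q_{k-1}+r(R+\gamma V')$.

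Next I would introduce the \emph{fixed} quantity $TD_\gamma:=R+\gamma V'-q_0$, which is exactly the $TD_\gamma$ of the statement, and track the deviation $q_k-q_0$. Subtracting $q_0$ and regrouping gives the first–order linear recursion with constant forcing term $q_k-q_0=(1-r)(q_{k-1}-q_0)+r\,TD_\gamma$. One may either unroll this in closed form, $q_k-q_0=\bigl(1-(1-r)^k\bigr)TD_\gamma$, and expand by the binomial theorem, $1-(1-r)^k=\sum_{j=1}^{k}(-1)^{j-1}\binom{k}{j}r^j=kr+\sum_{j=2}^{k}(-1)^{j-1}\binom{k}{j}r^j$; or prove the claimed form directly by induction on $k$.

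For the inductive route — closest to the stated form — I would posit $q_k=q_0+P_k(r)\,TD_\gamma$ with $P_k(r)=\sum_{j=1}^{k}(-1)^{j-1}c_{j,k}r^j$, verify the base case $k=1$ (there $P_1(r)=r$ since the $j\geq 2$ sum is empty and $TD_\gamma^{(0)}=TD_\gamma$), and push through: $q_{k+1}-q_0=(1-r)(q_k-q_0)+r\,TD_\gamma$ forces the polynomial identity $P_{k+1}(r)=(1-r)P_k(r)+r$. Matching powers of $r$ yields the interior recursion $c_{j,k+1}=c_{j,k}+c_{j-1,k}$ together with the boundary values $c_{1,k}=k$ and $c_{k,k}=1$; these are Pascal's rule with its boundary conditions, so $c_{j,k}=\binom{k}{j}$, which matches the statement (and, modulo the evident typo ``$c_{2,k}=3$'', reconciles it: the intended seed is $c_{2,2}=1$, or equivalently $c_{1,k}=k$).

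There is no serious analytic obstacle. The two points requiring care are: (i) recognising that $s\neq s'$ is precisely what decouples the $\max$-term and linearises the loop — without it the recursion would be genuinely nonlinear through $\max_{a'}Q^{(k)}(s',a')$ whenever $a$ attains the maximum at $s=s'$; and (ii) the bookkeeping that translates the scalar closed form $1-(1-r)^k$ into the triangular array $\{c_{j,k}\}$, in particular checking that the stated coefficient recursion with the corrected seed values indeed reproduces the binomial coefficients.
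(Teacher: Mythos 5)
Your proof is correct, and its inductive branch is essentially the paper's own argument: both hinge on the observation that $s\neq s'$ freezes $\max_{a'}Q_{t+1}^{(k)}(s',a')$ at $\max_{a'}Q_t^{(K)}(s',a')$, reduce the inner loop to the affine scalar recursion $q_k=(1-r)q_{k-1}+r(R+\gamma V')$, and verify the claimed polynomial form by induction, extracting $c_{j,k+1}=c_{j,k}+c_{j-1,k}$ by matching powers of $r$. What you add, and the paper does not have, is the closed form $q_k-q_0=\bigl(1-(1-r)^k\bigr)TD_\gamma$ together with the identification $c_{j,k}=\binom{k}{j}$: the paper leaves the coefficients as an unrecognized triangular array obtained by brute-force expansion for $k=1,2,3$ and describes their growth as ``Fibonacci-sequence-like,'' when the recursion is exactly Pascal's rule. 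You also correctly diagnose the typo in the statement ($c_{2,k}=3$ holds only for $k=3$; the intended boundary data are $c_{1,k}=k$, implicit in the $kr$ term, and $c_{k,k}=1$, consistent with the paper's own computations $c_{2,2}=1$, $c_{2,3}=3$). The closed form is a genuine improvement because it makes the discussion following the theorem exact rather than simulation-based: the effective per-cycle learning rate is $1-(1-\alpha/K)^K\to 1-e^{-\alpha}$, so for $\alpha=1$ the add-on term $\sum_{j=2}^{K}(-1)^{j-1}c_{j,K}r^{j}$ equals $-(1-1/K)^{K}\approx -e^{-1}\approx -0.37$, a quantity that is not obviously ``orders of magnitude smaller than $\alpha$'' and is worth reconciling with Figure \ref{fig-learnratekadd}.
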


Hence, for $k=K$ and fully observed data, the full update is the usual complete data Q learning update for $(s,a,r',s')$ with learning rate $\alpha$, plus an additional polynomial with alternating signs and all terms smaller than $(\alpha/K)^2$  in magnitude as long as $\alpha/K < 1$.  As $K$ grows, $\alpha/K < 1$ holds eventually and the $r^j$ decrease exponentially, but the $c_{j,k}$ also grow exponentially because of their recursive Fibonacci-sequence-like form (the Fibonacci sequence grows exponentially by Binet's formula).  However, the alternating signs create some cancellation, and simulations indicate that for $\alpha < 1$ and all reasonablly small $K$,

\begin{equation*}
 Q^{(K)}_{t+1}(s,a) \approx   Q_t^{(K)}(s,a)+ \alpha TD_{\gamma}
\end{equation*}

Figure \ref{fig-learnratekadd} displays the size of the learning rate `add-on' term for $k=K$ and $\alpha=1$ over a range of plausible $K$ and shows that even for small $K$, this is orders of magnitude smaller than $\alpha$.

\begin{figure*}[!t]
    \centering
    \includegraphics[width = .6\linewidth]{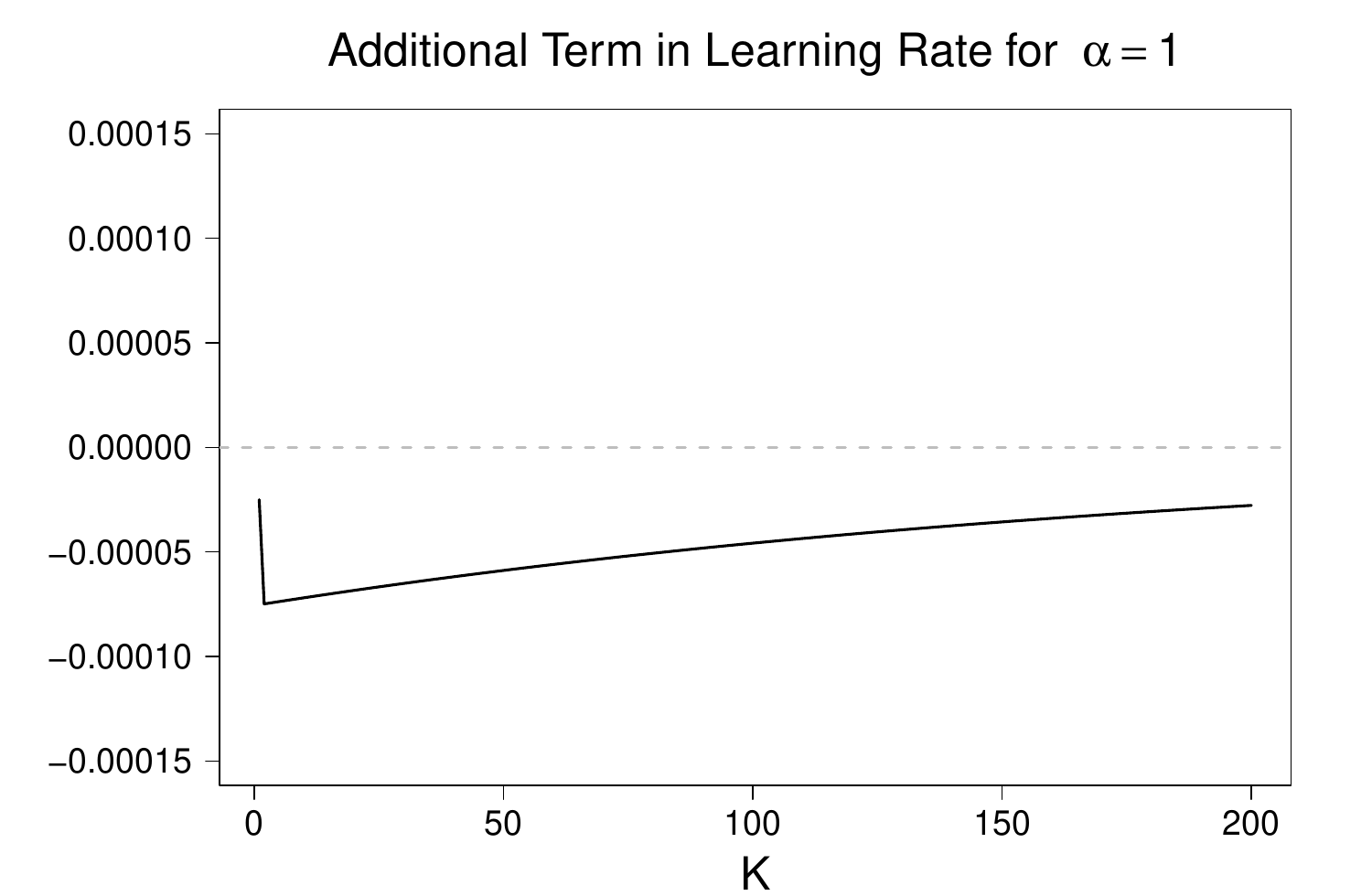}
    \caption{Magnitude of $\sum_{j=2}^{K} (-1)^{j-1}c_{j,K} r^j$ over range of $K$ with $r=1/K$.}
    \label{fig-learnratekadd}
\end{figure*}

If $s=s'$, then this result still applies if $\max_{a'}Q_{t+1}^{(k)}(s,a') = \max_{a'}Q_t^{(K)}(s,a')$ for all $k$, which is more likely as the Q-Learning converges so that small updates do not change the maximum. %

\begin{figure*}
\begin{proof}
If $s\neq s'$, $\max_{a'}Q_{t+1}^{(k)}(s',a') = \max_{a'}Q_t^{(K)}(s',a')$ for all $k$ because $Q_{t+1}^{(k)}(s',a)$ is never updated for any $a$ relative to $Q_{t}^{(K)}$. Let $M=R+\gamma \max_{a'}Q_t^{(K)}(s',a)$ and $L=(M - Q_t^{(K)}(s,a))$ and adopt the convention that a sum from $j=a..b$ is $0$ if $b<a$. Also, let $\frac{\alpha}{K}=r$.
\begin{align*}
    Q_{t+1}^{(1)}(s,a)&= Q_t^{(K)}(s,a) + r(M - Q_t^{(K)}(s,a))\\ 
    Q_{t+1}^{(2)}(s,a)&= {\color{blue}Q_{t+1}^{(1)}(s,a)} + r(M - Q_t^{(1)}(s,a))\\ 
    &={\color{blue} Q_t^{(K)}(s,a) + r\left(M - Q_t^{(K)}(s,a)\right)}  + r\left(M - Q_t^{(K)}(s,a) - r(M - Q_t^{(K)}(s,a))\right)\\ 
    &= Q_t^{(K)}(s,a) + 2r\left(M - Q_t^{(K)}(s,a)\right)  - r^2L\\
     &= \color{orange} Q_t^{(K)}(s,a) + 2rL  - r^2L\\
 Q_{t+1}^{(3)}(s,a)  
 &= {\color{blue}Q_{t+1}^{(2)}(s,a)} + r(M - Q_t^{(2)}(s,a))\\ 
 &= {\color{blue} Q_t^{(K)}(s,a) + 2rL  - r^2L} + r\left(M - Q_t^{(K)}(s,a) - 2rL-r^2L \right)\\ 
 &= {\color{blue} Q_t^{(K)}(s,a) + 3rL  - r^2L}  - 2r^2L + r^3L \\ 
 &= \color{orange} Q_t^{(K)}(s,a) + 3rL   - 3r^2L + r^3L \\ %
Q_{t+1}^{(k)}(s,a)&= Q_t^{(K)}(s,a)+krL + L\left(\sum_{j=2}^{k} (-1)^{j-1}c_{j,k} r^j \right) \\ 
Q_{t+1}^{(k+1)}(s,a)&= Q_{t+1}^{(k)}(s,a) + r(M - Q_{t+1}^{(k)}(s,a))\\  
&= Q_t^{(K)}(s,a)+krL + L\left(\sum_{j=2}^{k} (-1)^{j-1}c_{j,k} r^j \right)  + r\left(L - L\left(\sum_{j=2}^{k} (-1)^{j-1}c_{j,k} r^j \right) \right)\\
&= Q_t^{(K)}(s,a)+(k+1)rL + L\left(\sum_{j=2}^{k} (-1)^{j-1}c_{j,k} r^j \right)  + r\left(- L\left(\sum_{j=2}^{k} (-1)^{j-1}c_{j,k} r^j \right) \right)\\
&= Q_t^{(K)}(s,a)+(k+1)rL + L\left( (-1)^{k}r^{k+1} + \sum_{j=2}^{k} (-1)^{j-1}(c_{j,k} + c_{j-1,k})r^j \right) \\ %
&=  Q_t^{(K)}(s,a)+(k+1)rL + L\left(\sum_{j=2}^{k+1} (-1)^{j-1}c_{j,k+1}r^j \right) \\
\end{align*}

\end{proof}
\end{figure*}

\newpage 
\section{Figure Appendix}
\subsection{Additional MCAR Figures} \label{appendix-mcar}

\noindent%

\begin{figure}[H]
\includegraphics[keepaspectratio=true,scale=0.8]{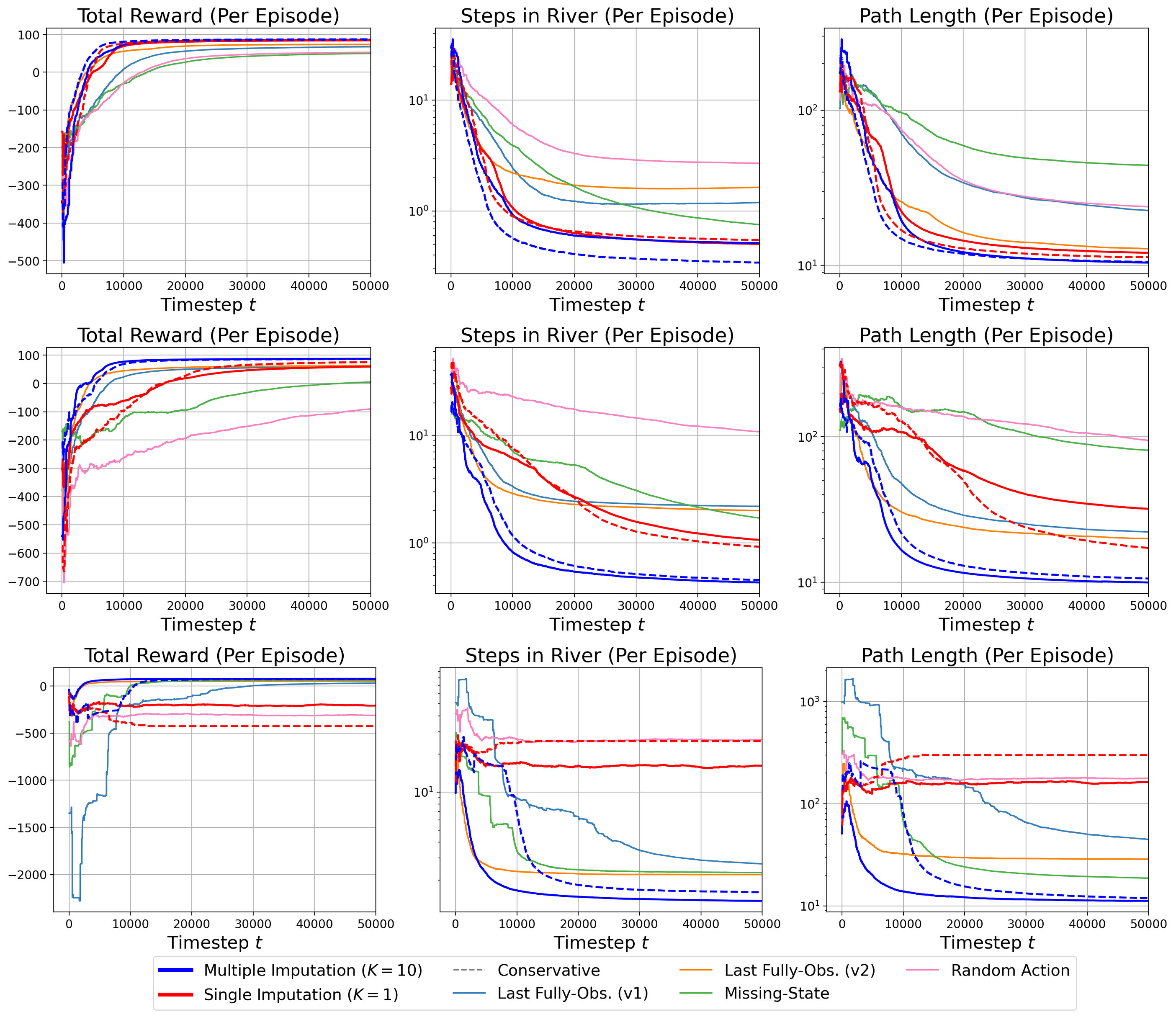}
\caption{\footnotesize{Comparing performance over time for MCAR with $\theta=0.2$ (\textbf{top}), $\theta = 0.4$ (\textbf{middle}), $\theta = 0.8$ (\textbf{bottom}).  The y-axis in each case is the cumulative mean over episodes at time $t$ of the given metric. The environment is set to $P(\text{wind}) = 0.1$, $P(\text{flood\  transition}) = 0.1$. Each method shown for its best $\epsilon,\alpha,\gamma$, and action space option (stay in place allowed or not). Lines represent an average over 5 trials. Note the log scale for the center and right plot. }}\label{fig-extramcar-additional}
\end{figure}

\subsection{Additional MCOLOR Figures}\label{app-mcolor}

\begin{figure}[H]
 \centering
\includegraphics[keepaspectratio=true,scale=1.0]{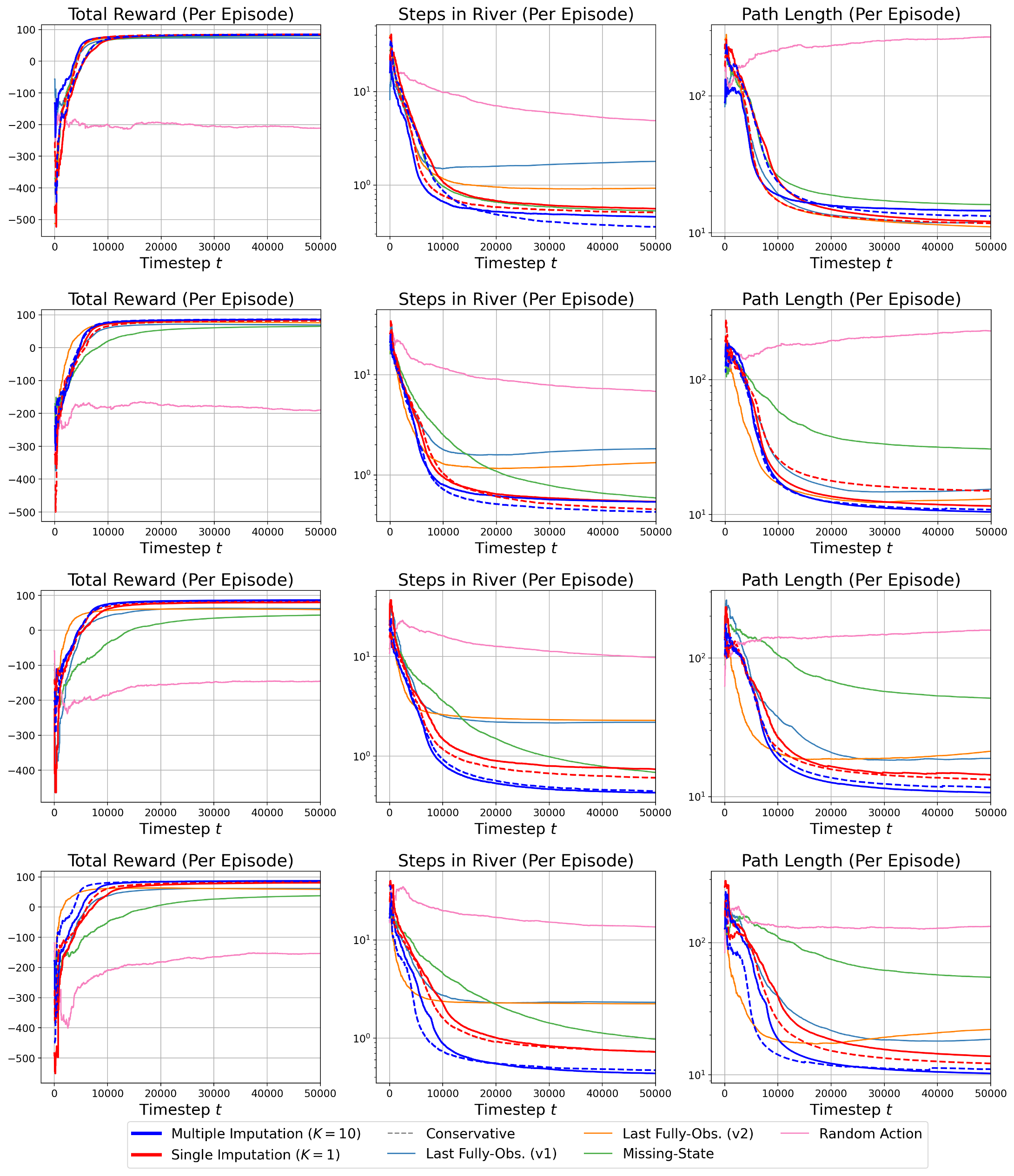}
\caption{\footnotesize{Comparing performance over time for MCOLOR, where missingness is color-specific with rate $\theta_c$ for color $c\in \{g,o,r\}$ (green, orange, red). \textbf{Top}: $(\theta_g,\theta_o,\theta_r) = (0.1,0.2,0.3)$ but only for  $(x,y)$, color never missing. \textbf{Second}: same as previous only now color also missing. \textbf{Third}: $(\theta_g,\theta_o,\theta_r) = (0.2,0.4,0.6)$ but only for  $(x,y)$, color never missing. \textbf{Bottom}: same as previous only now color also missing. This plot appears in Figure \ref{fig-ColorFog} in the main text. The y-axis in each case is the cumulative mean over episodes at time $t$ of the given metric. The environment is set to $P(\text{wind}) = 0.1$, $P(\text{flood\  transition}) = 0.1$. Each method shown for its best $\epsilon,\alpha,\gamma$, and action space option (stay in place allowed or not). Lines represent an average over 5 trials. Note the log scale for the center and right plot. }}\label{fig-extramcolor}%
\end{figure}

\subsection{Additional MFOG Figures}\label{app-mfog}

\begin{figure}
    \centering
\includegraphics[keepaspectratio=true,scale=0.8]{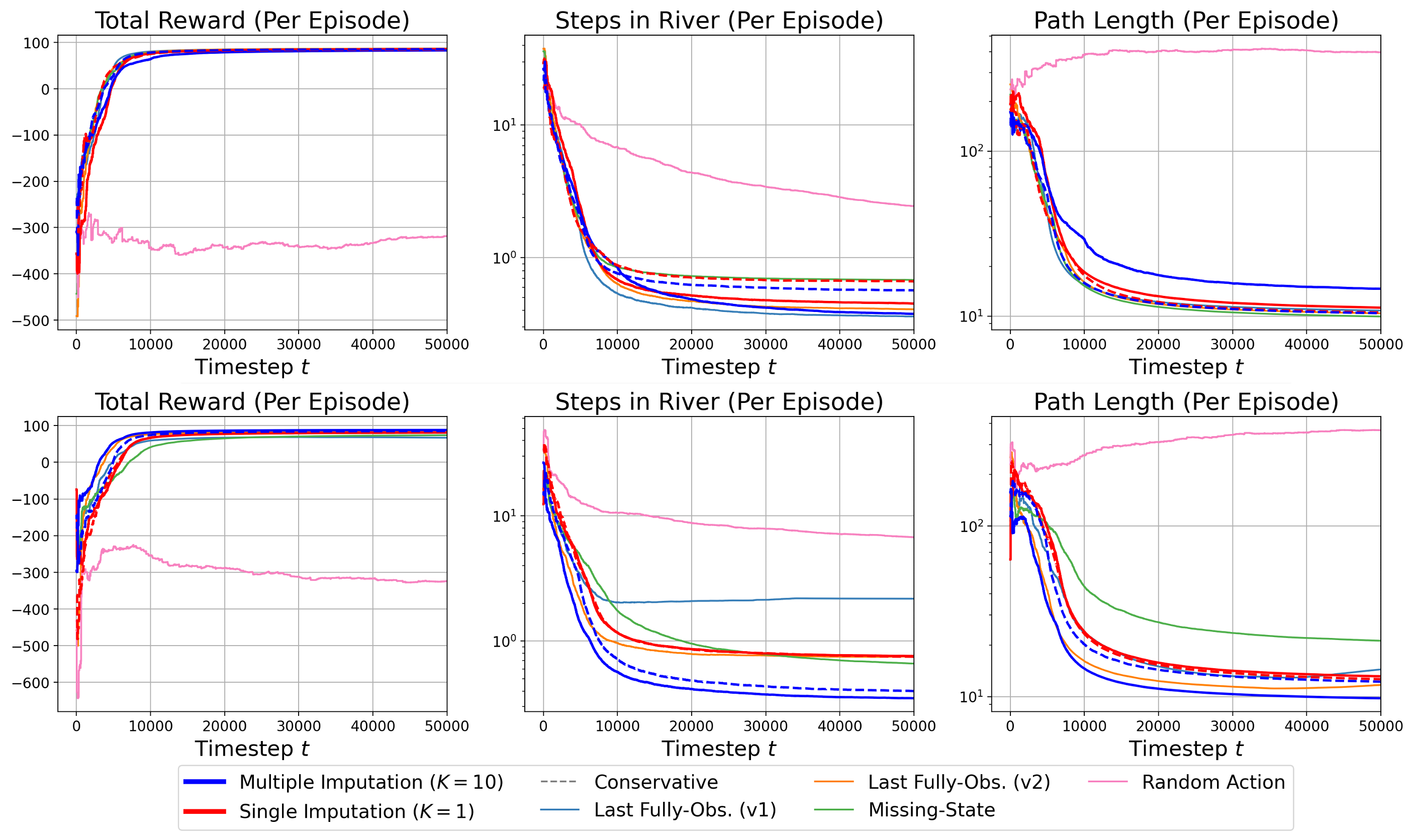}
\caption{\footnotesize{Comparing performance over time for MFOG. \textbf{Top:} Missingness rate inside fog region is $0.5$ while outside is $0$. This plot appears in Figure \ref{fig-ColorFog} in the main text. \textbf{Bottom:} Missingness rate inside fog region is $0.25$ while outside is $0.1$. The y-axis in each case is the cumulative mean over episodes at time $t$ of the given metric. The environment is set to $P(\text{wind}) = 0.1$, $P(\text{flood\  transition}) = 0.1$. Each method shown for its best $\epsilon,\alpha,\gamma$, and action space option (stay in place allowed or not). Lines represent an average over 5 trials. Note the log scale for the center and right plot. }}\label{fig-extramcolor+extramcar}
\end{figure}

\subsection{Choosing $K$} \label{app-choose K}

\begin{minipage}{\textwidth}%
\makebox[\textwidth]{%
  \includegraphics[keepaspectratio=true,scale=1.3]{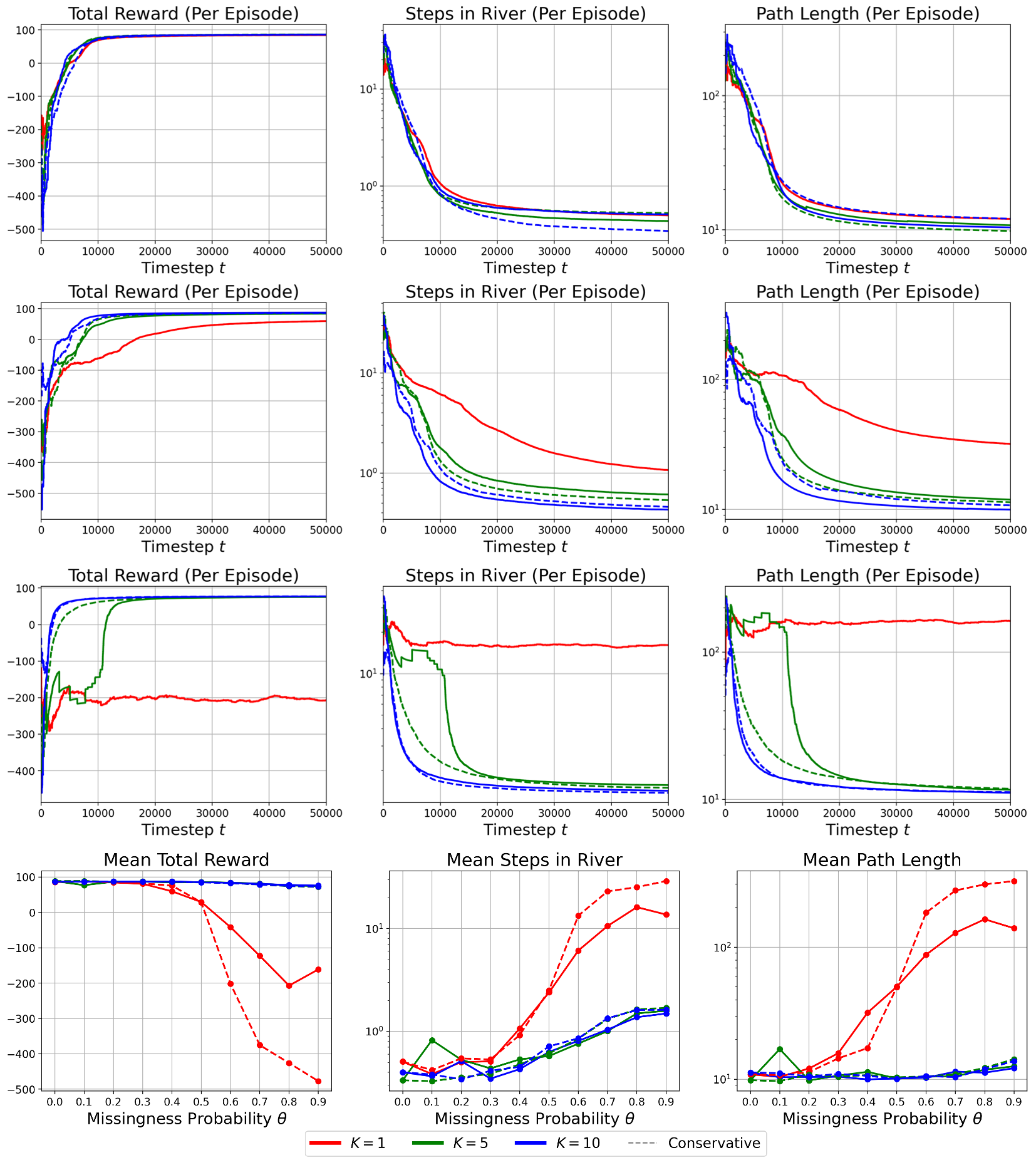}}
\captionof{figure}{\footnotesize{Comparing performance over time for $K=1,5,10$ in the MCAR setting with $\theta = 0.2$ \textbf{(Top)}, $\theta = 0.4$ \textbf{(Second)}, $\theta = 0.8$ \textbf{(Third)}. The y-axis in each case for these first three rows is the cumulative mean over episodes at time $t$ of the given metric. \textbf{Bottom} plot shows how mean performance over 50000 timesteps changes with MCAR missing rate. The environment is set to $P(\text{wind}) = 0.1$, $P(\text{flood\  transition}) = 0.1$. Each method shown for its best $\epsilon,\alpha,\gamma$, and action space option (stay in place allowed or not). Lines represent an average over 5 trials. Note the log scale for the center and right plot. }}\label{fig-extramcolor+compareK}%
\end{minipage}

\subsection{Mixing} \label{app-mixing}

\begin{minipage}{\textwidth}%
\makebox[\textwidth]{%
  \includegraphics[keepaspectratio=true,scale=0.8]{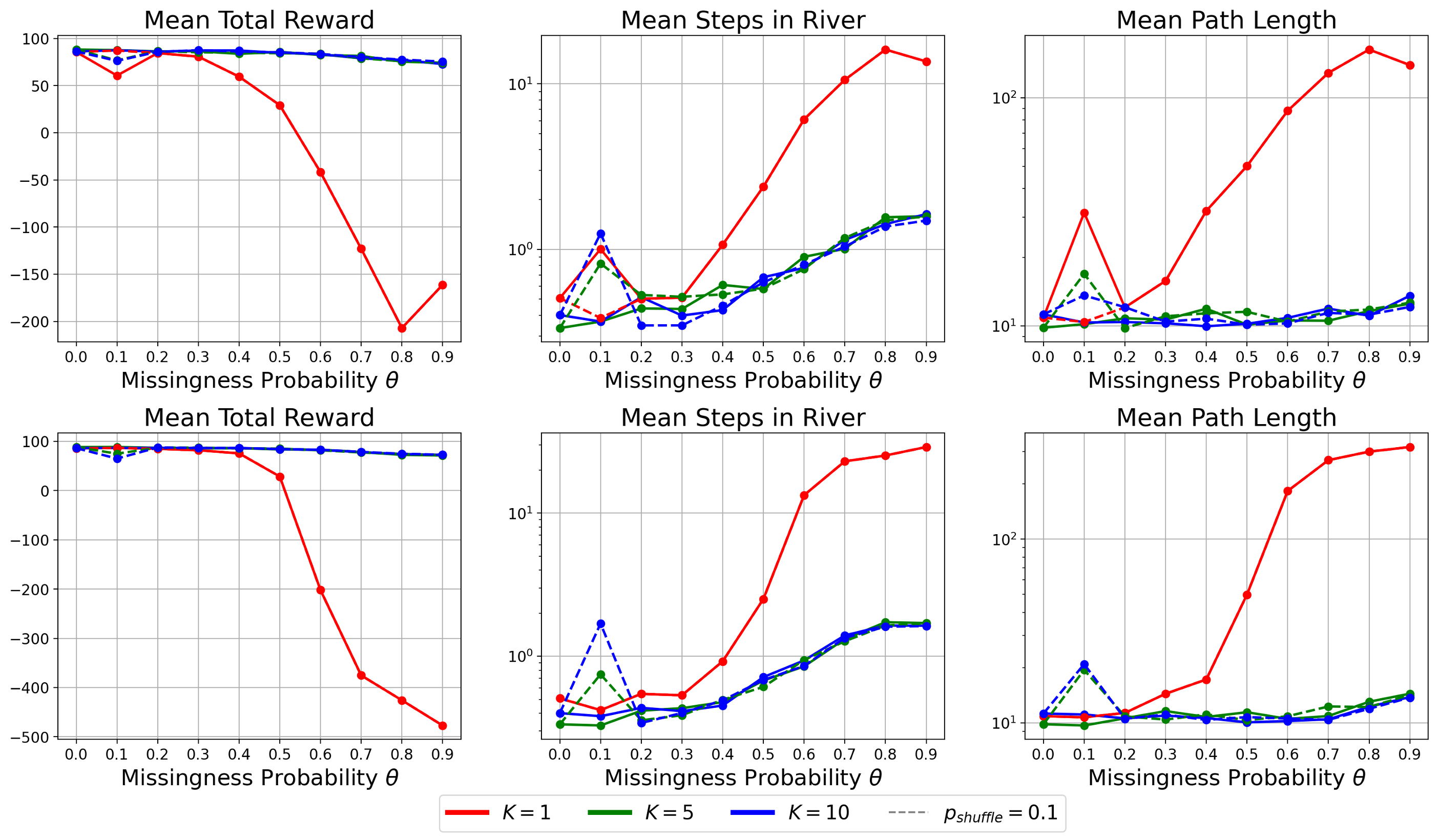}}
\captionof{figure}{\footnotesize{ \textbf{Top}: comparing $K=1,5,10$ with mixing ($p_{\text{shuffle}} = 0.1$, dashed line) and without missing ($p_{\text{shuffle}} = 0$, solid line) for MI with synthetic T-updates. \textbf{Bottom}: The same for MI with conservative T-updates.
The y-axis in each case is the cumulative mean over episodes at time $t$ of the given metric. The environment is set to $P(\text{wind}) = 0.1$, $P(\text{flood\  transition}) = 0.1$. Each method shown for its best $\epsilon,\alpha,\gamma$, and action space option (stay in place allowed or not). Lines represent an average over 5 trials. Note the log scale for the center and right plot. }}\label{fig-extramcar}%
\end{minipage}

\section{Stay In Place} \label{app-stay-in-place}

\begin{minipage}{\textwidth}%
\makebox[\textwidth]{%
  \includegraphics[keepaspectratio=true,scale=0.75]{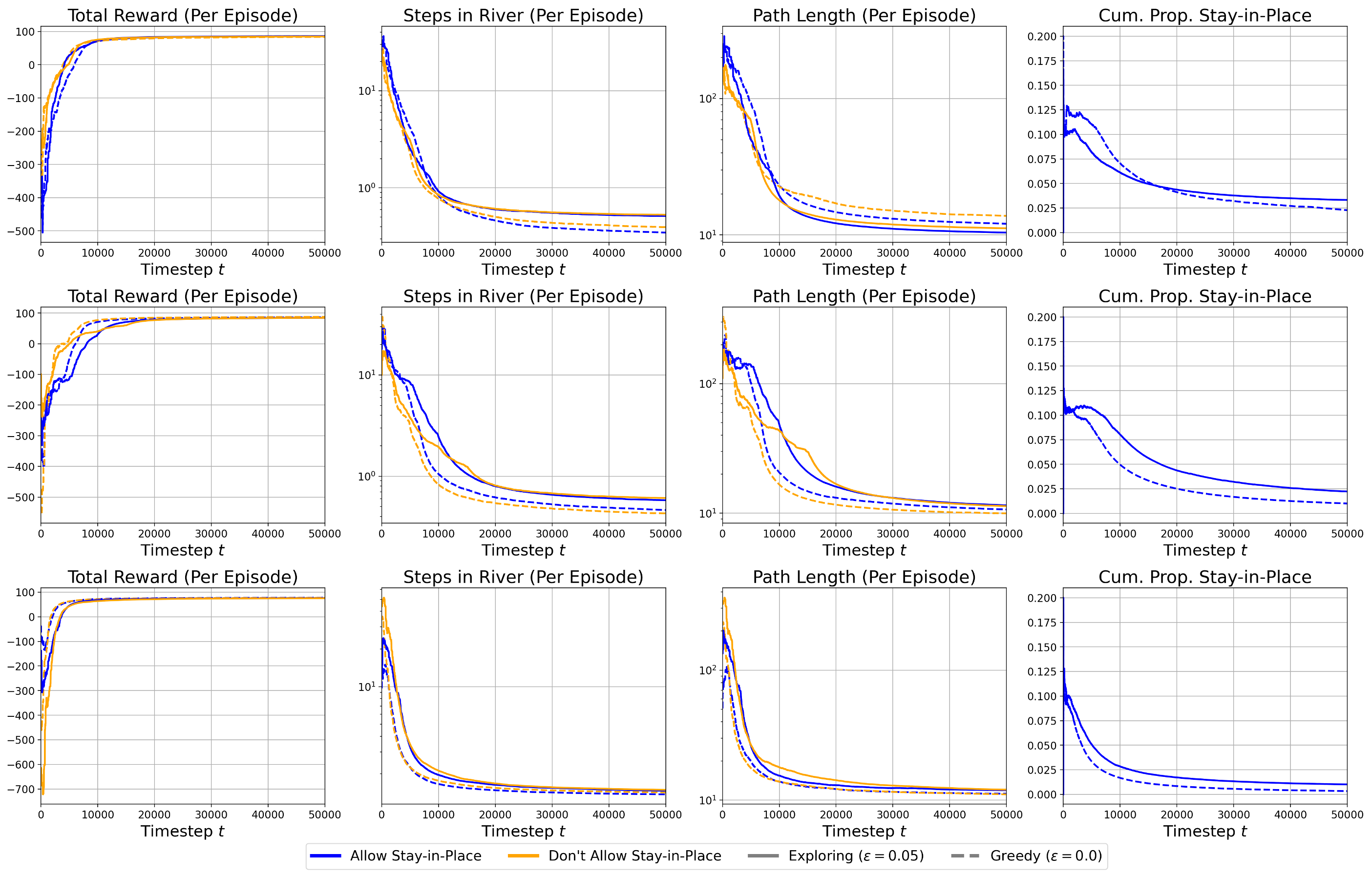}}
\captionof{figure}{\footnotesize{Comparing the role of allowing stay-in-place as an action versus not allowing stay-in-place as an action, as well as investigating the effects of allowing $\epsilon$-greedy exploration ($\epsilon = 0.05$) versus going full greedy ($\epsilon = 0.0$). Performance over time curves are shown for MCAR with $\theta=0.2$ (\textbf{top}), $\theta = 0.4$ (\textbf{middle}), $\theta = 0.8$ (\textbf{bottom}). We use the MI with synthetic T-update with $K=10$ for all curves shown. For the first three columns, the y-axes are the cumulative means per episodes at time $t$ of the given metric. The last column shows the cumulative proportion of timesteps that the agent chose to stay in place. The environment is set to $P(\text{wind}) = 0.1$, $P(\text{flood\  transition}) = 0.1$. Each method shown for its best $\alpha,\gamma$. Lines represent averages over 5 trials. Note the log scale for the two center plots. }}\label{fig-stay-in-place}
\end{minipage}

\end{document}